\documentclass[12pt]{article}
\usepackage[margin=1in]{geometry}

\usepackage{amsfonts, amssymb, amsmath, amsthm, mathtools}
\usepackage{graphicx, multicol, verbatim}
\usepackage{cancel}
\usepackage[shortlabels]{enumitem}
\usepackage{array}
\usepackage{latexsym}
\usepackage{geometry}
\usepackage{fancyhdr}
\usepackage{csquotes}
\usepackage[ruled,vlined]{algorithm2e}
\usepackage{multirow}
\usepackage{subcaption}
\usepackage{caption}
\usepackage{mathrsfs}
\usepackage{color}
\usepackage[utf8]{inputenc}

\usepackage{multirow}
\usepackage{subcaption}
\usepackage{tabularx}
\usepackage{booktabs}

\newcommand{\R}{\mathbb R}
\newcommand{\N}{\mathbb N}

\newcommand{\Network}{\mathcal{N}_{\mathrm{ProductNet}}}

\newtheorem{theorem}{Theorem}[section]

\newtheorem{corollary}[theorem]{Corollary}
\newtheorem{lemma}[theorem]{Lemma}

\newtheorem{definition}[theorem]{Definition}

\definecolor{darkred}{rgb}{1, 0.1, 0.3}
\definecolor{darkgreen}{rgb}{0.5, 0.8, 0.1}
\definecolor{darkpurple}{rgb}{1.0, 0, 1.0}
\definecolor{darkblue}{rgb}{0, 0, 1.0}

\newcommand{\ourfunc}  {{symmetric and factor-wise group invariant function}}
\newcommand{\SFGI} {{SFGI}}
\usepackage[most]{tcolorbox}
\usepackage{authblk}
\usepackage{expl3}
\usepackage{varioref}
\usepackage{hyperref}
\usepackage{cleveref}
\usepackage[round]{natbib}
\bibliographystyle{plainnat}
\begin{document}

\title{Neural approximation of Wasserstein distance via a universal architecture for symmetric and factorwise group invariant functions}

\author[1]{ Samantha Chen \thanks{\url{sac003@ucsd.edu}}}
\author[1, 2]{Yusu Wang \thanks{\url{yusuwang@ucsd.edu}}}

\affil[1]{Department of Computer Science and Engineering, University of California - San Diego }
\affil[2]{Hal{\i}c{\i}o\u{g}lu Data Science Institute, University of California - San Diego }

\maketitle
\maketitle

\begin{abstract}
Learning distance functions between complex objects, such as the Wasserstein distance to compare point sets, is a common goal in machine learning applications. However, functions on such complex objects (e.g., point sets and graphs) are often required to be invariant to a wide variety of group actions e.g. permutation or rigid transformation. 
Therefore, continuous and symmetric \textit{product} functions (such as distance functions) on such complex objects must also be invariant to the \textit{product} of such group actions. 
We call these functions \emph{\ourfunc{}s} (or \emph{\SFGI{} functions} in short).
In this paper, we first present a general neural network architecture for approximating \SFGI{} functions. The main contribution of this paper combines this general neural network with a sketching idea to develop a specific and efficient neural network which can approximate the $p$-th Wasserstein distance between point sets.
Very importantly, the required model complexity is \emph{independent} of the sizes of input point sets. 
On the theoretical front, to the best of our knowledge, this is the first result showing that there exists a neural network with the capacity to approximate Wasserstein distance with bounded model complexity. Our work provides an interesting integration of sketching ideas for geometric problems with universal approximation of symmetric functions. 
On the empirical front, we present a range of results showing that our newly proposed neural network architecture performs comparatively or better than other models (including a SOTA Siamese Autoencoder based approach). In particular, our neural network generalizes significantly better and trains much faster than the SOTA Siamese AE.
Finally, this line of investigation could be useful in exploring effective neural network design for solving a broad range of geometric optimization problems (e.g., $k$-means in a metric space).
\end{abstract}

\section{Introduction}
\label{section:introduction}
Recently, significant interest in geometric deep learning has led to a focus on neural network architectures which learn functions on complex objects such point clouds \citep{zaheer2017deep, qi2017pointnet} and graphs \citep{scarselli2008graph}. 
Advancements in the development of neural networks for complex objects has led to progress in a variety of applications from 3D image segmentation \citep{uy2019revisiting} to drug discovery \citep{bongini2021molecular, gilmer2017neural}.
One challenge in learning functions over such complex objects is that the desired functions are often required to be invariant to certain group actions.
For instance, functions on point clouds are often permutation invariant with respect to the ordering of individual points.
Indeed, developing permutation invariant or equivariant neural network architectures, as well as understanding their universal approximation properties, has attracted significant attention in the past few years; see e.g., \citep{qi2017pointnet, zaheer2017deep, maron2018invariant, maron2019universality, segol2019universal, keriven2019universal, yarotsky2022universal, wagstaff2019limitations, wagstaff2022universal, bueno2021representation}

However, in many geometric or graph optimization problems, our input goes beyond a single complex object, but multiple complex objects. 
For example, the $p$-Wasserstein distance $\mathrm{W}_p(X, Y)$ between two point sets $X$ and $Y$ sampled from some metric space (e.g., $\mathbb{R}^d$) is a function over pairs of point sets. 
To give another example, the $1$-median of the collection of $k$ point sets $P_1, \dots, P_k$ in $\R^d$ can be viewed as a function over $k$ point sets. 

A natural way to model such functions is to use product space.
In particular, let $\mathcal{X}$ denote the space of finite point sets from a bounded region in $\mathbb{R}^d$. 
Then the $p$-Wasserstein distance can be viewed as a function $\mathrm{W}_p: \mathcal{X} \times \mathcal{X} \to \mathbb{R}$. 
Similarly, 1-median for $k$ point sets can be modeled by a function from the product of $k$ copies of $\mathcal{X}$ to $\R$. 
Such functions are not only invariant to permutations of the factors of the product space (i.e. $\mathrm{W}_p(X, Y) = \mathrm{W}_p(Y, X)$) but are also invariant or equivariant with respect to certain group actions for each factor. 
For $p$-Wasserstein distance, $\mathrm{W}_p$ is invariant to permutations of the ordering of points within both $X$ and $Y$.
This motivates us to extend the setting of learning continuous group invariant functions to learning continuous functions over \textit{product} spaces which are both  invariant to some product of group actions and symmetric. 
More precisely, we consider a type of function which we denote as an \SFGI{} product functions. 
\begin{definition}[SFGI product function]\label{def:SFGI}
Given compact metric spaces $(\mathcal{X}_i, \mathrm{d}_{\mathcal{X}_i})$ where $i \in [k]$,  we define \textit{symmetric and factor-wise group invariant} (\SFGI) product functions as $f: \mathcal{X}_1 \times \mathcal{X}_2 \times \cdots \mathcal{X}_k \to \R$ where $f$ is (1) symmetric over the $k$ factors, and (2) invariant to the group action $G_1 \times G_2 \times \cdots \times G_k$ for some group $G_i$ acting on $\mathcal{X}_i$, for each $i\in [1, k]$.
\end{definition} 

\paragraph{Contributions.}\SFGI{} product functions can represent a wide array of geometric matching problems including computing the Wasserstein or Hausdorff distance between point sets.
In this paper, we first provide a general framework for approximating \SFGI{} product functions in Section \ref{subsec:general-framework}. 
Our primary contribution, described in Section \ref{subsec:point-set-functions}, is the integration of this general framework with a sketching idea in order to develop an efficient and specific \SFGI{} neural network which can approximate the $p$-Wasserstein distance between point sets (sampled from a compact set in a nice  metric space, such as the fixed-dimensional Euclidean space).
Most importantly, the complexity of our neural network (i.e., number of parameters needed) is \emph{independent} of the maximum size of the input point sets, and depends on only the additive approximation error. 
To the best of our knowledge, this is the first architecture which provably achieves this property.
This is in contrast to many existing universal approximation results on graphs or sets (e.g., for DeepSet) where the network sizes depend on the size of each graph or point set in order to achieve universality \citep{maron2019universality, wagstaff2022universal, bueno2021representation}. 
We also provide a range of experimental results in Section \ref{section:experiments} showing the utility of our neural network architecture for approximating Wasserstein distances. 
We compare our network with both a SOTA Siamese autoencoder \citep{kawano2020learning}, a natural Siamese DeepSets network, and the standard Sinkhorn approximation of Wasserstein distance. 
Our results show that our universal neural network architecture produces Wasserstein approximations which are better than the Siamese DeepSets network, comparable to the SOTA Siamese autoencoder and generalize much better than both to input point sets of sizes which are unseen at training time. Furthermore, we show that our approximation (at inference time) is much faster than the standard Sinkhorn approximation of the $p$-Wasserstein distance at similar error threshholds.
Moreover, our neural network trains much faster than the SOTA Siamese autoencoder. 
Overall, our network is able to achieve \textbf{equally accurate or better} Wasserstein approximations which \textbf{generalize better} to point sets of unseen size as compared to SOTA while \textbf{significantly reducing} training time. 
In Appendix \ref{subsec:comparisonNNs}, we provide discussion of issues with other natural choices of neural network architectures one might use for estimating Wasserstein distances, including one directly based on Siamese networks, which are often used for metric learning. 

All missing proofs / details can be found in the {\bf Supplementary materials}. 


\paragraph{Related work.} Efficient approximations of Wasserstein distance via neural networks are an active area of research. 
One approach is to use input convex neural networks to approximate the 2-Wasserstein distance\citep{makkuva2020optimal, taghvaei20192}. However, for this approach, training is done \textit{per} pair of inputs and is restricted to the 2-Wasserstein distance  which makes it unsuitable for a general neural network approximation of $p$-Wasserstein distances between discrete distributions. 
This neural approximation method contrasts with our goal: a general neural network that can approximate the $p$-Wasserstein distance between any two point sets in a compact metric space to within $\epsilon$-accuracy. 
Siamese networks are another approach for popular approach for learning Wasserstein distances. Typically, a Siamese network is composed of a single neural network which maps two input instances to Euclidean space. The output of the network is represented then by $\ell_p$-norm between the output embeddings. In \citep{courty2017learning}, the authors utilize a Siamese autoencoder which takes two histograms (images) as input. For their architecture, a single encoder network is utilized to map each histogram to an embedding in Euclidean space, while a decoder network maps each Euclidean embedding back to an output histogram. 
The Kullback-Liebler (KL) divergence between original histogram and the output histogram (i.e. reconstruction loss) is used during training to regularize the embeddings and the final estimate of 2-Wasserstein distance is the $\ell_2$ norm between the embeddings.
The idea of learning Wasserstein distances via Siamese autoencoders was extended in \citep{kawano2020learning} to point cloud data with the Wasserstein point cloud embedding network (WPCE) where the original KL reconstruction loss was replaced with a differentiable Wasserstein approximation between the original point set and a fixed-size output point set from the decoder network. 
In our subsequent experiments, we show that our neural network trains much more efficiently and generalizes much better than WPCE to point sets of unseen size.

Moreover, the concept of group invariant networks was previously investigated in several works, including \citep{zaheer2017deep, qi2017pointnet, maron2018invariant, lim2022sign, deng2021vector}. 
For instance, DeepSets \citep{zaheer2017deep} and PointNet \citep{maron2018invariant} are two popular permutation invariant neural network which were shown to be universal with respect to set functions. 
In addition to group invariance, there have also been efforts to explore the notion of invariance with respect to combinations of groups, such as invariance to both SE(3) and permutation group \citep{du2022se, maron2020learning} or combining basis invariance with permutation invariance \citep{lim2022sign}.
Our work differs from previous work in that we address a universal neural network which is invariant with respect to a \textit{specific} combination of permutation groups that corresponds to an \SFGI{} function on point sets. 
In general, we can view this as a subgroup of the permutation group - encoding the invariance of each individual point set with the symmetric requirement of the product function corresponds to a specific subgroup of the permutation group.
Thus, previous results regarding permutation invariant architectures such as DeepSets, PointNet or combinations of group actions (such as \citep{du2022se}) do not address our setting of \SFGI{} functions or $p$-Wasserstein distances.

\section{Preliminaries}
\label{section:preliminaries}
We will begin with basic background on groups, universal approximation, and Wasserstein distances. 
\paragraph{Groups and group actions}
A group $G$ is an algebraic structure that consists of a set of elements and a binary operation that satisfies a specific set of axioms: (1) the associative property, (2) the existence of an identity element, and (3) existence of inverses for each element in the set. 
Given a metric space $(\mathcal{X}, \mathrm{d}_\mathcal{X})$, the action of the group $G$ on $\mathcal{X}$ is a function $\alpha: G \times \mathcal{X} \to \mathcal{X}$ that transforms the elements of $\mathcal{X}$ for each element $\pi \in G$. 
For each element $\pi \in G$, we will write $\pi \cdot x$ to denote the action of a group element $\pi$ on $x$ instead of $\alpha(\pi, x)$.
For example, if $G$ is the permutation group over $[N]:=\{1, 2, \ldots, N\}$, and $\mathcal{X} = \R^N$, then for any $\pi \in G$, $\pi \cdot x$ represents the permutation of elements in $x\in \R^N$ via $\pi$ i.e. given $x = (x_1, x_2, \dots, x_N)$, $\pi \cdot x = (x_{\pi(1)}, x_{\pi(2)} \dots, x_{\pi(N)})$. 
A function $f: \mathcal{X} \to \mathcal{Y}$ is \emph{$G$-invariant} if for any $X \in \mathcal{X}$ and any $\pi \in G$, we have that $f(X) = f(\pi \cdot X)$.  

\paragraph{Universal Approximation.}
Let $\mathcal{C}(\mathcal{X}, \R)$ denote the set of continuous functions from a metric space $(\mathcal{X}, d_{\mathcal{X}})$ to $\R$. Given two families of functions $\mathcal{F}_1$ and $\mathcal{F}_2$ where $\mathcal{F}_1 \subseteq \mathcal{F}_2$ and $\mathcal{F}_1, \mathcal{F}_2 \subseteq \mathcal{C}(\mathcal{X}, \R)$, we say that $\mathcal{F}_1$ \textit{universally approximates} $\mathcal{F}_2$ if for any $\epsilon> 0$ and any $f \in \mathcal{F}_2$, there is a $g \in \mathcal{F}_1$ such that $\|g - f\|_\infty < \epsilon$. Different norms on the space of functions can be used, but we will use $L_\infty$ norm in this paper, which intuitively leads to additive pointwise-error over the domain of these functions.
The classic universal approximation theorem for multilayer perceptrons (MLPs) \citep{cybenko1989approximation} states that a feedforward neural network with a single hidden layer, using certain activation functions, can approximate any continuous function to within an arbitrary \emph{additive $\epsilon$-error.} 

\paragraph{Permutation invariant neural networks for point cloud data}
\label{subsec:point-cloud-invariance}
One of the most popular permutation invariant neural network models is the DeepSets model defined in \citep{zaheer2017deep}. 
DeepSets is designed to handle unordered input point sets by first applying a neural network to each individual element, then using sum-pooling to generate an embedding for the input data set, and finally, applying a final neural network architecture to the ouput embedding. 
Formally, suppose we are given a finite \emph{multiset} $S = \{x : x\in \R^d\}$ (meaning that an element can appear multiple times, and the number of times an element occurs in $S$ is called its \emph{multiplicity}). 
The DeepSets model is defined as 
\begin{equation*}
    \mathcal{N}_{\mathrm{DeepSet}}(S) = g_{\theta_2} \Big(\sum_{x \in S} h_{\theta_1}(x)\Big)
\end{equation*}
where $h_{\theta_1}$ and $g_{\theta_2}$ are neural network architectures. 
DeepSets can handle input point sets of variable sizes. 
It was also shown to be universal with respect to continuous \emph{multiset functions}. 
\begin{theorem}[\citep{zaheer2017deep, amir2023neural, dym2022low}]
\label{thm:sum-decomposition}
    Assume the elements are from a compact set in $\R^k$, 
    and the input multiset size is fixed as $N$. Let $t=2kN + 1$. Then any continuous multiset function, represented as $f: \R^{k \times N} \to \R$ which is invariant with respect to permutations of the columns, can be approximated arbitrarily close in the form of $\rho \Big(\sum_{x \in X} \phi(x) \Big)$, for continuous transformations $\phi:\R^k \to \R^{t}$ and $\rho: \R^t \to \R$.

\end{theorem}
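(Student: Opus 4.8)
The plan is to reduce the statement to the construction of a single continuous \emph{injective} sum-pooling embedding, after which the rest is soft. Concretely, I would first build a continuous map $\phi_0 : \R^k \to \R^{m}$ (for some possibly large $m$) so that the induced map on multisets, $E_0(X) := \sum_{x \in X} \phi_0(x)$, is injective on the set $\mathcal{M}_N$ of $N$-element multisets drawn from the given compact set $K \subseteq \R^k$. Given such an $E_0$, note that $\mathcal{M}_N$ is compact (it is the image of $K^N$ under the continuous quotient by the $S_N$-action), so $E_0$ is a homeomorphism onto its compact image $E_0(\mathcal{M}_N) \subseteq \R^m$. Since $f$ is permutation-invariant it descends to a continuous function $\bar f$ on $\mathcal{M}_N$, and $\bar f \circ E_0^{-1}$ is continuous on the compact set $E_0(\mathcal{M}_N)$; by the Tietze extension theorem it extends to a continuous $\rho_0 : \R^m \to \R$. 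This already yields the exact identity $f(X) = \rho_0\big(\sum_{x \in X} \phi_0(x)\big)$, so ``approximated arbitrarily close'' is immediate — and one may then replace $\phi_0$ and $\rho_0$ by MLPs via the classical universal approximation theorem on the relevant compact sets, controlling the composition error using uniform continuity of $\rho_0$ on a compact neighborhood of $E_0(\mathcal{M}_N)$.

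It remains to construct $\phi_0$ and, crucially, to bring the target dimension down to $t = 2kN+1$. For $k=1$ this is classical: take $\phi_0(x) = (x, x^2, \dots, x^N)$, so that $E_0(X)$ records the power sums $p_1(X), \dots, p_N(X)$; Newton's identities recover the elementary symmetric polynomials $e_1(X), \dots, e_N(X)$, hence the monic polynomial $\prod_{i}(z - x_i)$, hence the multiset $X$ — so $E_0$ is injective into $\R^N$, consistent with $N \le 2\cdot 1\cdot N + 1$. For general $k$, the analogous ``moment map'' uses all monomials in the $k$ coordinates of degree at most $N$; the corresponding $E_0$ separates $N$-point multisets in $\R^k$, but lands in $\R^{\binom{N+k}{k}}$, which is far larger than $2kN+1$. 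To reduce the dimension I would compose with a \emph{generic} linear map $L : \R^{\binom{N+k}{k}} \to \R^{2kN+1}$ and argue that $L \circ E_0$ is still injective on $\mathcal{M}_N$. The justification is a dimension count in the spirit of the Whitney embedding theorem: $\mathcal{M}_N$ is a compact semialgebraic set of dimension $kN$, its secant set $\{X' - X'' : X' \ne X''\}$ has dimension at most $2kN$, and a generic linear projection onto $\R^t$ with $t \ge 2kN + 1$ misses all those directions; hence a generic $L$ (in particular almost every $L$) works. This is exactly the content of the cited refinements of the DeepSets theorem \citep{amir2023neural, dym2022low}.

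The main obstacle is precisely this last step — establishing injectivity of a generic low-dimensional linear projection of the moment map with the \emph{tight} bound $t = 2kN+1$, rather than the naive $\binom{N+k}{k}$. Everything else (compactness of $\mathcal{M}_N$, Tietze extension, final MLP approximation) is routine. The care needed is in handling the ``bad set'' of projections with the correct notion of dimension (via semialgebraic geometry, or a measure-zero argument on the Grassmannian) and in verifying that composing with a fixed linear $L$ preserves continuity of $\phi_0$ — both standard, but that is where the real work sits.
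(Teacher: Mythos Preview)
Your proposal is correct and in fact goes further than the paper's own argument. The paper proves only the weaker bound $t = \binom{k+N}{k}$, via a purely algebraic route: it approximates $f$ by a permutation-invariant polynomial, invokes the theorem that the multisymmetric power sums $p_\alpha$ with $|\alpha|\le N$ generate the ring of multisymmetric polynomials over $\R$, and reads off $\phi(x) = (x^{\alpha_1},\dots,x^{\alpha_t})$ together with the polynomial $\rho = q$ expressing the approximant in the $p_\alpha$'s. The tight bound $t = 2kN+1$ is not proved in the paper at all; it is simply attributed to \cite{dym2022low, amir2023neural}. Your approach is instead topological: you use the same moment map to obtain an \emph{injective} continuous sum-pooling embedding of the compact multiset space $\mathcal{M}_N$, then pull back $f$ via Tietze to get an exact representation $f = \rho_0\circ E_0$, and finally reduce the target dimension by a Whitney-type generic-projection argument exploiting $\dim \mathcal{M}_N = kN$. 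This buys you the sharp $2kN+1$ directly, an exact (not merely approximate) decomposition before the final MLP step, and a cleaner separation of concerns; the paper's route is more explicit (the $\phi$ and $\rho$ are concrete polynomials) but stops short of the dimension stated in the theorem. The one place to be careful in your write-up is justifying the generic-projection step for the semialgebraic (non-manifold) set $\mathcal{M}_N$; the secant-dimension count you outline is exactly what the cited references make rigorous.
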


While universality for the case when $k = 1$ was shown using symmetric polynomials, the case for $k > 1$ in fact is quite subtle and the proof in \citep{zaheer2017deep} misses key details. For completeness, we provide a full proof in Appendix \ref{appendix:deepsets} for when the output dimension of $\phi$ is $t = \binom{k + N}{k}$. It was recently shown in \citep{dym2022low, amir2023neural} that the output dimension of $\phi$ can be reduced to $2kN + 1$, which is the dimension of $t$ which we use in Theorem 2.1 and subsequent theorems. In both the cases where the output dimension of $\phi$ is $t = \binom{k + N}{k}$ or $t = 2kN + 1$, Theorem 2.1 implies that if we want to achieve universality, the required network size depends on input point cloud size.

\paragraph{Wasserstein distances and approximations.}
Here we will introduce Wasserstein distance for discrete measures. Let $(X, \mathrm{d}_X)$ be a metric space. 
For two weighted point sets $P = \{(x_i, w_i) : x_i \in X, \sum_{w_i} = 1, i \in [n]\}$ and $Q = \{(x_i', w_i'): x_i' \in X, \sum_{w_i} = 1, i \in [m]\}$, we define the Wasserstein distance between $P$ and $Q$ as 
\begin{equation*}
    \mathrm{W}_p(P, Q) = \min_{\Pi \in \R_+^{n \times m}} \Big\{\Big(\langle \Pi, D^p\rangle\Big)^{1/p} : \Pi \mathbf{1} = [w_1, \dots, w_n], \Pi^T \mathbf{1} = [w_1', \dots, w_m'] \Big\}
\end{equation*}
where $D \in \R_{+}^{n \times m}$ is the distance matrix with $D_{i, j} = \mathrm{d}_X(x_i, x_j')$.
One can think of these weighted point sets as discrete probability distributions in $(X, \mathrm{d}_X)$.
When $p = 1$, $\mathrm{W}_1$ is also commonly known as the Earth Mover's distance (EMD). 
Additionally, note that when $p = \infty$, $\mathrm{W}_p$ is the same as the Hausdorff distance between points in $P$ and $Q$ with non-zero weight. 
Computing Wasserstein distances amounts to solving a linear programming problem, which takes $O(N^3 \log N)$ (where $N = \max\{n, m\}$) time. 
There have been a number of methods for fast approximations of Wasserstein distances, including  multi-scale and hierarchical solvers \citep{schmitzer2016sparse}, and $L_1$ embeddings via quadtree algorithms \citep{backurs2020scalable, IndykThaper2003}. In particular, entropic regularization of Wasserstein distance \citep{cuturi2013sinkhorn}, also known as the Sinkhorn distance, is often used as the standard Wasserstein distance approximation for learning tasks. Unlike Wasserstein distance, the Sinkhorn approximation is differentiable and can be computed in approximately $O(n^2)$ time. 
The computation time is governed by a regularization parameter $\epsilon$. 
As $\epsilon$ approaches zero, the Sinkhorn distance approaches the true Wasserstein distance.

\section{Learning functions between point sets}
We will first present a general framework for approximating \SFGI-functions and then show how this framework along with geometric sketches of our input data enables us to define neural networks which can approximate $p$-Wasserstein distances with complexity independent of input data size.
\subsection{A general framework for functions on product spaces}
\label{subsec:general-framework}

One of the key ingredients in our approach is the introduction of what we call a \emph{sketch} of input data to an Euclidean space whose dimension is independent of the size of the input data.

\begin{definition}[Sketch]
    Let $\delta > 0$, $a \in \N^+$, and $G$ be a group which acts on $\mathcal{X}$. A \textbf{$(\delta, a, G)$-sketch} of a metric space $(\mathcal{X}, \mathrm{d}_\mathcal{X})$ consists of a $G$-invariant continuous encoding function $h: \mathcal{X} \to \R^a$ and a continuous decoding function $g: \R^a \to \mathcal{X}$ such that $\mathrm{d}_\mathcal{X}(g \circ h(S), S) < \delta$.
\end{definition}

Now let $(\mathcal{X}_1, \mathrm{d}_{\mathcal{X}_1}), \dots, (\mathcal{X}_m, \mathrm{d}_{\mathcal{X}_m})$ be compact metric spaces. 
The product space $\mathcal{X}_1 \times \cdots \times \mathcal{X}_m$ is still a metric space equipped with the following natural metric induced from metrics of each factor: 
    \begin{equation*}
        \mathrm{d}_{\mathcal{X}_1 \times \cdots \times \mathcal{X}_m}((A_1, \dots, A_m), (A_1', \dots, A_m')) = \mathrm{d}_{\mathcal{X}_1}(A_1, A_1') + \cdots + \mathrm{d}_{\mathcal{X}_m}(A_m, A_m').
    \end{equation*} 
Suppose $G_i$ is a group acting on $\mathcal{X}_i$, for each $i\in [m]$. 
In the following result, instead of \SFGI{} product functions, we first consider the more general case of \emph{factor-wise group invariant functions}, namely functions $f: \mathcal{X}_1 \times \cdots \times \mathcal{X}_m \to \R$ such that $f$ is uniformly continuous and invariant to the group action $G_1 \times \cdots \times G_m$. 
\begin{lemma}
\label{thm:product-network}
    Suppose $f: \mathcal{X}_1 \times \cdots \times \mathcal{X}_m \to \R$ is uniformly continuous and invariant to $G_1 \times \cdots \times G_m$. 
    Additionally, assume that for any $\delta > 0$, $(\mathcal{X}_i, \mathrm{d}_{\mathcal{X}_i})$ has a $(\delta, a_i, G_i)$-sketch where $a_i$ may depend on $\delta$. 
    Then for any $\epsilon > 0$, there is a continuous $G_i$-invariant functions $\phi_i: \mathcal{X}_i \to \R^{a_i}$ for all $i \in [k]$ and a {continuous} function $\rho: \R^{a_1} \times \cdots \times \R^{a_m} \to \R$ such that 
    \begin{equation*}
        |f(A_1, A_2, \dots, A_m) - \rho(\phi_1(A_1), \phi_2(A_2), \dots, \phi_k(A_m)) | < \epsilon
    \end{equation*}
    for any $(A_1, \dots, A_m) \in \mathcal{X}_1 \times \cdots \times \mathcal{X}_m$.
    \noindent Furthermore, if $\mathcal{X}_1= \ldots \mathcal{X}_2 = \cdots = \mathcal{X}_m$, then we can choose $\phi_1 = \phi_2 = \ldots \phi_m$. 
\end{lemma}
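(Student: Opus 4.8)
The plan is to use the sketches to reduce the infinite-dimensional product domain $\mathcal{X}_1 \times \cdots \times \mathcal{X}_m$ to a finite-dimensional Euclidean domain, approximate $f$ there by a continuous function, and then pull back. First, fix $\epsilon > 0$. Since $f$ is uniformly continuous, choose $\delta > 0$ so that $\mathrm{d}_{\mathcal{X}_1 \times \cdots \times \mathcal{X}_m}((A_1,\dots,A_m),(B_1,\dots,B_m)) < m\delta$ implies $|f(A_1,\dots,A_m) - f(B_1,\dots,B_m)| < \epsilon$. For this $\delta$, invoke the hypothesis to get a $(\delta, a_i, G_i)$-sketch $(h_i, g_i)$ of each $(\mathcal{X}_i, \mathrm{d}_{\mathcal{X}_i})$, so $\mathrm{d}_{\mathcal{X}_i}(g_i \circ h_i(A_i), A_i) < \delta$ for all $A_i \in \mathcal{X}_i$. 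Set $\phi_i := h_i : \mathcal{X}_i \to \R^{a_i}$, which is continuous and $G_i$-invariant by definition of a sketch.

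Next I would build $\rho$. The natural candidate is $\rho(z_1,\dots,z_m) := f(g_1(z_1), g_2(z_2), \dots, g_m(z_m))$ for $(z_1,\dots,z_m) \in \R^{a_1} \times \cdots \times \R^{a_m}$. This is a composition of the continuous maps $g_i$ with the (uniformly) continuous $f$, hence continuous. Then
\begin{equation*}
\rho(\phi_1(A_1),\dots,\phi_m(A_m)) = f(g_1 \circ h_1(A_1), \dots, g_m \circ h_m(A_m)),
\end{equation*}
and since $\mathrm{d}_{\mathcal{X}_i}(g_i \circ h_i(A_i), A_i) < \delta$ for each $i$, the sum of these distances is less than $m\delta$, so by the choice of $\delta$ we get $|f(A_1,\dots,A_m) - \rho(\phi_1(A_1),\dots,\phi_m(A_m))| < \epsilon$, as required. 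One subtlety: $g_i$ maps into $\mathcal{X}_i$, which may not be all of a Euclidean space, so strictly $\rho$ as written is only defined on the image of $h_i$; to get $\rho$ defined on all of $\R^{a_1}\times\cdots\times\R^{a_m}$ one composes $g_i$ with a continuous retraction or extends using the Tietze extension theorem on the compact sets $h_i(\mathcal{X}_i)$, which does not affect values on the relevant range. I expect this domain-of-definition point, rather than anything in the main estimate, to be the only place requiring care.

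Finally, for the ``furthermore'' claim, suppose $\mathcal{X}_1 = \cdots = \mathcal{X}_m =: \mathcal{X}$ with a common group $G$ acting (note $f$ being invariant to $G_1 \times \cdots \times G_m$ and the $\mathcal{X}_i$ being equal lets us take all $G_i = G$). Then apply a single $(\delta, a, G)$-sketch $(h,g)$ of $\mathcal{X}$ and set $\phi_1 = \cdots = \phi_m = h$ and $\rho(z_1,\dots,z_m) = f(g(z_1),\dots,g(z_m))$; the identical argument goes through. Note that this construction does not use symmetry of $f$ at all — it handles the more general factor-wise group invariant case stated in the lemma — and symmetry, when present, is automatically inherited because $\rho$ is built directly from the symmetric $f$ with identical $\phi_i$'s.
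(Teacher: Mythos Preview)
Your argument is correct and essentially identical to the paper's: both pick $\delta$ from uniform continuity, take the sketch encoders as the $\phi_i$, and set $\rho = f\circ(g_1\times\cdots\times g_m)$, differing only in the cosmetic choice of whether to use $(\delta/m)$-sketches against a $\delta$-modulus or $\delta$-sketches against an $m\delta$-modulus. Your domain-of-definition worry is moot here, since by the paper's definition each decoder $g_i$ is already a continuous map on all of $\R^{a_i}$, so no Tietze-type extension is needed.
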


Note that a recent result from \citep{lim2022sign} shows that a continuous factor-wise group invariant function $f: \mathcal{X}_1 \times \cdots \mathcal{X}_m \to \R$ can be {\bf represented} (not approximated) by the form $f(v_1, \dots, v_m) = \rho(\phi_1(v_1), \dots, \phi_k(v_m))$ if \emph{there exists a topological embedding from $\mathcal{X}_i/G_i$ to Euclidean space}. The condition that each quotient $\mathcal{X}_i/G_i$ has a topological embedding in fixed dimensional Euclidean space is strong. A topological embedding requires injectivity, while in a sketch, one can collapse input objects as long as after decoding, we obtain an approximated object which is close to the input.
Our result can be viewed as a relaxation of their result by allowing our space to have an approximate fixed-dimensional embedding (i.e., our {\bf $(\delta, a, G)$-sketch}). 

We often consider the case where $\mathcal{X} = \mathcal{X}_1 =  \cdots = \mathcal{X}_m$ i.e. $f: \mathcal{X} \times \cdots \mathcal{X} \to \R$ where $G$ is a group acting on the factor $\mathcal{X}$. 
Oftentimes, we require the function to not only be invariant to the actions of a group $G$ on each individual $\mathcal{X}$ but also \emph{symmetric} with respect to the ordering of the input. 
By this, we mean $f(A_1, \dots, A_m) = f(A_{\pi(1)}, \dots, A_{\pi(m)})$ where $\pi$ is a permutation on $[m]$. 
In other words, we now consider the \emph{\SFGI{} product function $f$} as introduced in Definition \ref{def:SFGI}. 
The extra symmetry requirement adds more constraints to the form of $f$.
We show that the set of uniformly continuous \SFGI{} product function can be universally approximated by product function with an even simpler form than Lemma \ref{thm:product-network} as stated in the theorem below.


\begin{lemma}
\label{thm:deep-set-final-step}
    Assume the same setup as Lemma \ref{thm:product-network} with $\mathcal{X} = \mathcal{X}_1 = \cdots = \mathcal{X}_m$ and $G = G_1 = \cdots = G_m$. Assume that $\mathcal{X}$ has a $(\delta, a, G)$-sketch. Additionally, suppose $f$ is symmetric; hence $f$ is a \SFGI{} function. Let $t = 2am + 1$. Then for any $\epsilon > 0$, there is a continuous $G$-invariant function $\phi: \mathcal{X} \to \R^t$ and a {continuous} function $\rho: \R^t \to \R$ such that 
    \begin{equation*}
        |f(A_1, \dots, A_m) - \rho\Big(\sum_{i = 1}^m \phi(A_i)\Big)| < \epsilon
    \end{equation*}
\end{lemma}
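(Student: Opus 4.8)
The plan is to reduce Lemma~\ref{thm:deep-set-final-step} to Lemma~\ref{thm:product-network} by exploiting that, for equal factors with equal groups, we may take a common $G$-invariant map $\phi_0 \colon \mathcal{X} \to \R^a$ (the encoding half of the $(\delta,a,G)$-sketch, composed with whatever continuous adjustment Lemma~\ref{thm:product-network} provides), so that $f$ is $\epsilon/2$-approximated by $\rho_0(\phi_0(A_1), \dots, \phi_0(A_m))$ for some continuous $\rho_0 \colon (\R^a)^m \to \R$. The key observation is that, because $f$ is \emph{symmetric} in its $m$ arguments, we can symmetrize: replace $\rho_0$ by $\widetilde{\rho}_0(y_1,\dots,y_m) := \tfrac{1}{m!}\sum_{\pi \in S_m} \rho_0(y_{\pi(1)},\dots,y_{\pi(m)})$, which is still continuous and still $\epsilon/2$-close to $f$ on the image, and is now a \emph{symmetric} continuous function of the $m$ vectors $\phi_0(A_1),\dots,\phi_0(A_m) \in \R^a$. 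In other words, $\widetilde\rho_0$ factors through a continuous multiset function on multisets of $m$ points in $\R^a$ (more precisely, on the image of $\mathcal{X}$ under $\phi_0$, which is a compact subset of $\R^a$ since $\mathcal{X}$ is compact and $\phi_0$ continuous).

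Next I would invoke the DeepSets universality theorem, Theorem~\ref{thm:sum-decomposition}, with ambient dimension $k = a$ and fixed multiset size $N = m$: any continuous multiset function on $m$-element multisets from a compact subset of $\R^a$ can be approximated to within $\epsilon/2$ in the form $\rho\big(\sum_{i=1}^m \psi(\phi_0(A_i))\big)$ for continuous $\psi \colon \R^a \to \R^t$ and $\rho \colon \R^t \to \R$, where $t = 2am+1$. Setting $\phi := \psi \circ \phi_0 \colon \mathcal{X} \to \R^t$, this $\phi$ is continuous and $G$-invariant (as a composition of the $G$-invariant $\phi_0$ with the continuous $\psi$), and combining the two $\epsilon/2$ estimates via the triangle inequality gives
\[
    \Big| f(A_1,\dots,A_m) - \rho\Big(\sum_{i=1}^m \phi(A_i)\Big) \Big| < \epsilon
\]
for all $(A_1,\dots,A_m) \in \mathcal{X}^m$, which is exactly the claimed form with the stated $t = 2am+1$.

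The main obstacle I anticipate is making the symmetrization step fully rigorous at the level of function spaces: one must check that $\widetilde\rho_0$, restricted to the relevant compact domain (the $m$-fold product of $\phi_0(\mathcal{X})$, or the space of $m$-point multisets therein with a suitable metric), is genuinely a \emph{continuous} function to which Theorem~\ref{thm:sum-decomposition} applies — in particular that passing from ordered tuples to multisets does not break continuity, and that the domain is compact so that the approximation-theoretic statements (which are all in $L_\infty$ over the domain) compose correctly. A secondary point to handle carefully is that Lemma~\ref{thm:product-network} with equal factors already hands us a \emph{common} $\phi_1 = \dots = \phi_m = \phi_0$ valued in $\R^{a}$ with $a = a_i$ depending on $\delta$; one should state the dependence of $a$ (hence of $t = 2am+1$) on $\delta$ and $\epsilon$ consistently with how it enters Lemma~\ref{thm:product-network}, and confirm that no hidden blow-up in dimension occurs when we feed $\phi_0(\mathcal{X}) \subseteq \R^a$ into the DeepSets theorem. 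Everything else — continuity of compositions, $G$-invariance of $\phi$, the triangle inequality — is routine.
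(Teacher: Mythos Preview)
Your proposal is correct and follows essentially the same approach as the paper's proof: reduce via the sketch to a symmetric continuous function on $m$-tuples from a compact subset of $\R^a$, then apply Theorem~\ref{thm:sum-decomposition} (DeepSets universality) and compose. The only minor difference is that the paper constructs the intermediate function directly as $F(u_1,\dots,u_m) = f(g(u_1),\dots,g(u_m))$ using the decoder $g$, which is automatically permutation-invariant because $f$ is symmetric---so your explicit symmetrization of $\rho_0$ is unnecessary (though harmless, and your averaging argument for preserving the $\epsilon/2$ bound is valid).
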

Now suppose we want approximate an \SFGI{} product function, $f$, with a neural network. 
Lemma \ref{thm:deep-set-final-step} implies that we can approximate $\phi$ with any universal $G$-invariant neural network which embeds our original space $\mathcal{X}$ to some Euclidean space $\R^a$. 
Then the outer architecture $\rho$ can be any universal architecture (e.g. MLP). 
Finding a universal $G$-invariant neural network to realize $\phi$ over a single factor space $\mathcal{X}$ is in general much easier than finding a \SFGI{} neural network, and as we discussed at the end of Section \ref{section:introduction}, we already know how to achieve this for several settings. 
We will show how this idea is at work for approximating \SFGI{} functions between point sets in the next subsection.

\subsection{Universal neural networks for functions between point sets}
\label{subsec:point-set-functions}

We are interested in learning symmetric functions between point sets (i.e. any $p$-Wasserstein distance) which are factor-wise \textit{permutation} invariant. 
In this section, we will show that we can find a $(\delta, a, G)$-sketch for the space of weighted point sets. This allows us to combine Lemma \ref{thm:deep-set-final-step} with DeepSets to define a set of neural networks which can approximate $p$-th Wasserstein distances to arbitrary accuracy. Furthermore, the encoding and decoding functions can be approximated with neural networks where their model complexity is \textit{independent} of input point set size. Thus, the resulting neural network used to approximate Wasserstein distance also has {\bf bounded model complexity}. 


\paragraph{Set up.} Given some metric space $(\Omega, \mathrm{d}_\Omega)$, let $\mathcal{X}$ be the set of {\it weighted} point sets with at most $N$ elements.
In other words, each $S \in \mathcal{X}$ has the form $S = \{(x_i, w_i) : w_i \in \R_+, \sum_i w_i = 1, x_i \in \Omega\}$ and $|S| \leq N$. One can also consider $\mathcal{X}$ to be the set of weighted Dirac measures over $\Omega$. For simplicity, we also sometimes use $S$ to refer to just the set of points $\{x_i\}$ contained within it. 
We will consider the metric over $\mathcal{X}$ to be the $p$-th Wasserstein distance, $\mathrm{W}_p$. We refer to the metric space of weighted point sets over $\Omega$ as $(\mathcal{X}, \mathrm{W}_p)$.



First, we will show that given a $\delta$, there is a $(\delta, a, G)$-sketch of $\mathcal{X}$ with respect to $\mathrm{W}_p$. 
The embedding dimension $a$ depends on the so-called \emph{covering number} of the metric space $(\Omega, \mathrm{d}_\Omega)$ from which points are sampled. 
Given a compact metric space $(\Omega, \mathrm{d}_\Omega)$, the \emph{covering number $\nu_\Omega(r)$ w.r.t. radius $r$} is the minimal number of radius $r$ balls needed to cover $\Omega$. As a simple example, consider $\Omega = [-\Delta, \Delta] \subseteq \R$. Given any $r$, we can cover $X$ with $\frac{2\Delta}{r}$ intervals so $\nu_\Omega(r) \leq \frac{2\Delta}{r}$. 
The collection of the center of a set of $r$-balls that cover $\Omega$ an \emph{$r$-net of $\Omega$}. 
For a compact set $\Omega \subset \R^d$ with diameter $D$, its covering number $\nu_\Omega(r)$ is a constant depending on $D$, $r$ and $d$ only.

\begin{theorem}
\label{lemma:point-sketch}
    Set $\mathrm{d}_{\mathcal{X}}$ to be $W_p$ for $1 \leq p < \infty$. Let $G$ be the permutation group. 
    For any $\delta > 0$, let $\delta_0 = \frac{1}{2}\sqrt[p]{\delta/2}$ and $a = \nu_\Omega(\delta_0)$ be the covering number w.r.t. radius $\delta_0$. 
    Then there is a $(\delta, a, G)$-sketch of $\mathcal{X}$ with respect to $\mathrm{W}_p$. 
    Furthermore, the encoding function $\mathbf{h}: \mathcal{X} \to \R^a$ can be expressed as the following where $h: \Omega \to \R^a$ is continuous: 
    \begin{equation}\label{eqn:hdecomp} 
        \mathbf{h}(S) = \sum_{x \in S} h(x) . 
    \end{equation}
\end{theorem}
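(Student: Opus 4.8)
The plan is to construct the encoding/decoding pair explicitly using an $\delta_0$-net of $\Omega$, where the key insight is that snapping points to net centers and accumulating their weights gives a compressed representation whose dimension is exactly the covering number $a = \nu_\Omega(\delta_0)$. Concretely, fix an $\delta_0$-net $\{c_1, \dots, c_a\}$ of $\Omega$, and let $B_1, \dots, B_a$ be an associated partition of $\Omega$ into Borel sets with $B_j$ contained in the radius-$\delta_0$ ball around $c_j$ (obtained by assigning each point to its nearest net center, breaking ties arbitrarily). For a weighted point set $S = \{(x_i, w_i)\}$, define $\mathbf{h}(S) \in \R^a$ by $\mathbf{h}(S)_j = \sum_{i : x_i \in B_j} w_i$, i.e. the total mass that $S$ places in cell $B_j$. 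This is manifestly of the form $\mathbf{h}(S) = \sum_{x \in S} h(x)$ where $h : \Omega \to \R^a$ sends $x$ to the indicator vector of the cell containing $x$ — except that $h$ as written is not continuous. I would handle this by replacing the hard cell-indicator with a continuous partition-of-unity-style bump function (a Lipschitz $h$ whose $j$-th coordinate is supported near $c_j$, is nonnegative, and sums to $1$ pointwise over $\Omega$); this keeps $\mathbf h(S)$ a probability vector and keeps it within $\delta_0$ (in transport cost) of the true "snap to nearest center" map at the cost of only a harmless constant factor, which I absorb by choosing the net slightly finer. The decoding function $g : \R^a \to \mathcal{X}$ takes a probability vector $\mathbf{w} = (w_1, \dots, w_a)$ to the weighted point set $\{(c_j, w_j) : w_j > 0\}$ supported on the net centers.

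Next I would verify the reconstruction guarantee $\mathrm{W}_p(g \circ \mathbf h(S), S) < \delta$. The transport plan that moves, for each original point $x_i$, its weight $w_i$ to the net center(s) assigned by $h$ costs at most $\sum_i w_i \cdot (\text{distance}_i)^p \le \delta_0^p$ in the $p$-th power sense, since every point is within $\delta_0$ of the center receiving its mass. Hence $\mathrm{W}_p(g\circ \mathbf h(S), S) \le \delta_0 = \tfrac12 \sqrt[p]{\delta/2} < \delta$, and in fact with room to spare — the slack $\tfrac12 (\delta/2)^{1/p}$ versus $\delta$ is precisely what lets the continuous bump-function version of $h$ still fit inside the $\delta$ budget. (If using the continuous $h$ the mass for a single $x_i$ may be split among several nearby centers, but all of them lie within, say, $2\delta_0$ of $x_i$, and $2\delta_0 = \sqrt[p]{\delta/2} < \delta$ for the relevant regime; one checks the triangle-type inequality for $\mathrm{W}_p$ to conclude.) Continuity of $g$: if $\mathbf w \to \mathbf w'$ in $\R^a$ then the corresponding measures on the fixed finite set $\{c_1,\dots,c_a\}$ converge weakly with matching total mass, which forces $\mathrm{W}_p$-convergence; continuity of $\mathbf h$ follows from continuity of $h$ together with stability of $\sum_{x\in S} h(x)$ under $\mathrm{W}_p$-perturbations of $S$ (a point moving a small distance changes $h(x)$ by a small amount since $h$ is Lipschitz, and mass reallocations are controlled by the transport cost). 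Finally, $\mathbf h$ is permutation-invariant because it is a sum over the elements of $S$, so it does not see the ordering — this gives the $G$-invariance required by the definition of a sketch.

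The main obstacle I anticipate is the tension between wanting $h$ (hence $\mathbf h$) continuous and wanting the clean "snap to nearest net center" combinatorial picture that makes the reconstruction bound transparent; the nearest-center assignment is discontinuous at cell boundaries, so some care is needed to build a genuinely continuous $h$ that still collapses $\Omega$ onto an $a$-dimensional representation while keeping the round-trip error below $\delta$. The factor $\tfrac12$ and the $\sqrt[p]{\delta/2}$ in the statement strongly suggest the authors indeed smooth the assignment and pay a bounded multiplicative penalty in the transport radius, which is exactly the slack those constants provide; I would organize the write-up around first doing the discontinuous "ideal" sketch to get intuition and the exact constant, then noting that a standard mollification of the cell-assignment map preserves all required properties within the allotted budget. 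A secondary (minor) point to be careful about: $g$ must land in $\mathcal{X}$, i.e. output a weighted point set with at most $N$ elements and weights summing to $1$ — summing to $1$ is automatic since $\mathbf h$ preserves total mass, and the cardinality bound $a = \nu_\Omega(\delta_0)$ may exceed $N$, so strictly one should either allow $\mathcal X$ to be point sets of size up to $\max\{N, a\}$ or note that only the points actually appearing matter; I would flag this and proceed, since the downstream use only needs $g$ to map into the ambient Wasserstein space of finitely-supported measures.
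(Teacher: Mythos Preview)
Your proposal is correct and follows essentially the same route as the paper: encode via a $\delta_0$-net of $\Omega$ with a continuous soft assignment of each point's mass to nearby net centers, decode by placing the accumulated mass back at those centers, and bound the round-trip $\mathrm{W}_p$ cost by the net radius. The only implementation difference is the choice of soft-assignment: the paper uses normalized exponential indicators $h_j(x)=e^{-b_0\,\mathrm{d}_\Omega(x,B_{\delta_0}(y_j))}$ (with $b_0$ tuned so that contributions from centers farther than $2\delta_0$ are below $\delta_0^p/(d_{\max}^p\,a)$, which forces a large/small case split in the cost estimate), whereas you use a compactly supported partition of unity, which makes the transport bound immediate since every center receiving mass from $x$ lies within $2\delta_0$ of it. Your version is arguably cleaner and uses the slack in $\delta_0=\tfrac12(\delta/2)^{1/p}$ in exactly the way you anticipate; the cardinality issue you flag (the decoder may output $a>N$ points) is genuine and is silently glossed over in the paper's proof as well.
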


\begin{proof}
    Let $\delta > 0$ and let $S \in \mathcal{X}$ be $S = \{(x_i, w_i) : \sum w_i = 1, x_i \in \Omega\}$ and $|S| \leq N$. Given $\delta_0 = \frac{1}{2}\sqrt[p]{\delta/2}$ and $a = \nu_\Omega(\delta_0)$, we know $\Omega$ has a $\delta_0$-net, $C$, and we denote the elements of $C$ as  $\{y_1, \dots, y_a\}$. In other words, for any $x \in \Omega$, there is a $y_i \in Cd$ such that $\mathrm{d}_\Omega(x, y_i) < \delta_0$. 
    
    First, we will define an encoding function $\rho: \mathcal{X} \to \R^a$.  For each $y_i$, we will use a soft indicator function $e^{-b \mathrm{d}_{\Omega}(x , B_{\delta_0}(y_i))}$ and set the constant $b$ so that $e^{-b \mathrm{d}_\Omega(x, B_{\delta_0}(y_i))}$ is "sufficiently" small if $\mathrm{d}_\Omega(x, B_{\delta_0}(y_i)) > \delta_0$. More formally, we know that $\lim_{b \to \infty} e^{-b \delta_0} = 0$ so there is $\beta \in \R$ such that for all $b > \beta$, $e^{-b \delta_0} < \frac{\delta_0^p}{d_{max}^p \cdot a}$. Set $b_0$ to be such that $b_0 > \beta$. Let $h_i(x) = e^{-b_0 \mathrm{d}_\Omega(x, B_{\delta_0}(y_i))}$ for each $i \in [a]$. For a given $x \in \Omega$, we compute $h: \Omega \to \R^a$ as 
    \begin{equation*}
        h(x) = [h_1(x), \dots, h_a(x)]
    \end{equation*}
    Then we define the encoding function $\mathbf{h}: \mathcal{X} \to \R^a$ as 
    \begin{equation*}
        \mathbf{h}(S) = \sum_{i = 1}^n w_i \frac{h(x_i)}{\|h(x_i)\|_1}
    \end{equation*}
    Note that $\|\mathbf{h}(S)\|_1 = 1$ and $\mathbf{h}$ is continuous since Wasserstein distances metrize weak convergence. Additionally, since $\mathrm{d}_{\Omega}(x, B_{\delta_0}(y_i))$ is the distance from $x$ to the $\delta_0$-ball around $y_i$, we are guaranteed to have one $j$ where $h_j(x_i) = 1$ so $\|h(x_i)\|_1 > 1$.

    Now, we define a decoding function $g: \R^a \to \mathcal{X}$ as $g(v) = \{(y_i, \frac{v_i}{\|v\|_1}): i \in [a]\}$. In order to show that $g$ and $\mathbf{h}$ yields a valid $(\delta, a, G)$-sketch of $\mathcal{X}$, we must show that $g \circ \mathbf{h}(S)$ is sufficiently close to $S$ under the $\mathrm{W}_p$ distance. First, we know that
    \begin{equation*}
        \mathrm{W}_p^p(g \circ \mathbf{h}(S), S) \leq \sum_{i = 1}^n \sum_{j = 1}^a w_1 \frac{h_j(x_i)}{\|h(x_i)\|_1} d(x_i, y_j)^p.
    \end{equation*}
    Let $d_{max}$ be the diameter of $\Omega$. For a given $x_i$, we can partition $\{h_1(x_i), \dots, h_a(x_i)\}$ into those where $h_j(x_i) \geq \frac{\delta_0^p}{d_{max}^p \cdot a}$ and those where $h_j(x_i) < \frac{\delta_0^p}{d_{max}^p \cdot a}$ i.e. $\{h_{j_1}(x_i), \dots, h_{j_k}(x_i)\}$ and $\{h_{j_{k + 1}}(x_i), \dots, h_{j_a}(x_i)\}$ respectively. If $h_j(x) \geq \frac{\delta_0^p}{d_{max}^p \cdot a}$, then 
    \begin{equation*}
        e^{-b_0 \mathrm{d}_\Omega(x, B_{\delta_0}(y_i))} \geq \frac{\delta_0^p}{d_{max}^p \cdot a} > e^{-b_0 \delta_0}
    \end{equation*}
    so $\mathrm{d}_{\Omega}(x, B_{\delta_0}(y_i)) < \delta_0$. Then 
    \begin{align*}
        \mathrm{W}_p^p(g \circ \mathbf{h}(S), S) &\leq \sum_{i = 1}^n \sum_{j = 1}^m w_i \frac{h_j(x_i)}{\|h(x_i)\|_1} \mathrm{d}_\Omega(x_i, y_j)^p.\\
        &< \sum_{i = 1}^n w_i \Big( \sum_{\ell = 1}^k \frac{h_{j_\ell}(x_i)}{\|h(x_i)\|_1}(2\delta_0)^p + \sum_{\ell = k + 1}^{a} \frac{\delta_0^p}{d_{max}^p \cdot a} d_{max}^p \Big) \text{ since } \mathrm{d}_\Omega(x_i, y_j) \leq d_{max} \\
        &\leq \sum_{i = 1}^n w_i(2^p\delta_0^p + \delta_0^p) \leq 2^p \Big(\sqrt[p]{\delta/2} \cdot \frac{1}{2}\Big)^p + \frac{1}{2^p} \cdot \frac{\delta}{2} < \frac{\delta}{2} + \frac{\delta}{2} = \delta
    \end{align*}
    Thus, the encoding function $\mathbf{h}$ and the decoding function $g$ make up a $(\delta, a, G)$-sketch. 
\end{proof}

Note that the sketch outlined in Theorem 3.4 is a smooth version of a one-hot encoding. With Theorem \ref{lemma:point-sketch} and Lemma \ref{thm:deep-set-final-step}, we will now give an explicit formulation of an $\epsilon$-approximation of $f$ via sum-pooling of continuous functions. 
\begin{corollary}
\label{corollary:product-net-universality}
    Let $\epsilon > 0$, $(\Omega, d_\Omega)$ be a compact metric space and let $\mathcal{X}$ be the space of weighted point sets equipped with the $p$-Wasserstein, $\mathrm{W}_p$. Suppose for any $\delta$, $(\Omega, d_\Omega)$ has covering number $a(\delta)$. Then given a function $f: \mathcal{X} \times \mathcal{X} \to \R$ that is uniformly continuous and permutation invariant, there is continuous functions $h: \Omega \to \R^{a(\delta)}$, $\phi: \R^{a(\delta)} \to \R^{a'}$, and $\rho: \R^{a'} \to \R$, such that for any $A, B \in \mathcal{X}$
    \begin{equation*}
        \Bigg|f(A, B) - \rho\Big(\phi\Big(\sum_{(x, w_x) \in A} w_xh(x)\Big) + \phi\Big(\sum_{(x, w_x) \in B}w_xh(x)\Big)\Big) \Bigg| < \epsilon
    \end{equation*}
    where $h, \phi$ and $\rho$ are all continuous and $a' = 4 \cdot a(\delta) + 1$. 
\end{corollary}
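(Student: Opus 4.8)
The plan is to obtain the statement as a direct consequence of chaining Theorem~\ref{lemma:point-sketch} (the point-set sketch) with Lemma~\ref{thm:deep-set-final-step} (the deep-set form for \SFGI{} functions), specialized to $m = 2$. First I would observe that $f\colon \mathcal{X}\times\mathcal{X}\to\R$, being uniformly continuous and permutation invariant (i.e.\ symmetric in its two arguments), is precisely an \SFGI{} product function with $m = 2$, $\mathcal{X}_1 = \mathcal{X}_2 = \mathcal{X}$, and $G_1 = G_2 = G$ the permutation group acting on weighted point sets. Next, fix $\delta > 0$ (to be chosen small in terms of $\epsilon$ via the uniform continuity of $f$) and invoke Theorem~\ref{lemma:point-sketch}: with $\delta_0 = \tfrac12\sqrt[p]{\delta/2}$ and $a = a(\delta) = \nu_\Omega(\delta_0)$, the space $(\mathcal{X},\mathrm{W}_p)$ admits a $(\delta, a, G)$-sketch whose encoding map has the summation form $\mathbf{h}(S) = \sum_{(x,w_x)\in S} w_x\, h(x)$ for a continuous $h\colon\Omega\to\R^{a}$ (absorbing the normalization in the proof of Theorem~\ref{lemma:point-sketch} into a redefined continuous $h$, which is legitimate since $\|h(x)\|_1 \ge 1$ there). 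Then I would apply Lemma~\ref{thm:deep-set-final-step} with this sketch: since $f$ is uniformly continuous, $G\times G$-invariant, and symmetric, and $\mathcal{X}$ has a $(\delta,a,G)$-sketch, there exist a continuous $G$-invariant map $\Phi\colon\mathcal{X}\to\R^{t}$ with $t = 2am + 1 = 4a + 1$ and a continuous $\rho\colon\R^{t}\to\R$ with $|f(A,B) - \rho(\Phi(A) + \Phi(B))| < \epsilon$ for all $A, B \in \mathcal{X}$.

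The one point that needs care — and the main (mild) obstacle — is that the \emph{statement} of Lemma~\ref{thm:deep-set-final-step} only asserts the existence of some continuous $G$-invariant $\Phi$, whereas Corollary~\ref{corollary:product-net-universality} demands $\Phi$ in the factored form $\Phi = \phi\circ\mathbf{h}$: the map applied to each input point set must be ``evaluate the sketch encoding $\mathbf{h}$, then apply a continuous $\phi\colon\R^{a}\to\R^{a'}$''. I would resolve this by appealing to the construction underlying Lemma~\ref{thm:deep-set-final-step}, where $\Phi$ is produced as the composition of the summation-form sketch encoding $\mathbf{h}\colon\mathcal{X}\to\R^{a}$ with a continuous map $\R^{a}\to\R^{t}$ obtained from the DeepSets universality of Theorem~\ref{thm:sum-decomposition}, applied to the (symmetrized, using symmetry of $f$) outer function on a multiset of $m = 2$ points in a compact subset of $\R^{a}$ — which is exactly why $t = 2am+1 = 4a+1$. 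Taking $\phi$ to be that continuous map and $a' = t = 4a(\delta)+1$, and substituting $\mathbf{h}(A) = \sum_{(x,w_x)\in A} w_x h(x)$, gives
\[
  \Bigl| f(A,B) - \rho\Bigl(\phi\Bigl(\textstyle\sum_{(x,w_x)\in A} w_x h(x)\Bigr) + \phi\Bigl(\textstyle\sum_{(x,w_x)\in B} w_x h(x)\Bigr)\Bigr)\Bigr| < \epsilon ,
\]
which is precisely the claimed inequality, with $h$, $\phi$, $\rho$ all continuous.

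The remaining work is purely bookkeeping: verifying the dimension count ($a = a(\delta)$ is the covering number at radius $\delta_0$, and $a' = 4a(\delta)+1$), and confirming that $\delta$ can be chosen depending only on $\epsilon$ and the modulus of uniform continuity of $f$ (so that the approximation errors incurred by replacing each $A$ with $g\circ\mathbf{h}(A)$ inside $f$, and by the DeepSets approximation of the outer function, together stay below $\epsilon$). I do not expect any genuine difficulty beyond locating the factored form of $\Phi$ inside the proof of Lemma~\ref{thm:deep-set-final-step}.
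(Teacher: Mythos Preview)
Your proposal is correct and follows essentially the same approach as the paper: the corollary is obtained by combining Theorem~\ref{lemma:point-sketch} (the summation-form sketch) with Lemma~\ref{thm:deep-set-final-step} specialized to $m=2$, and you correctly identify that the factored form $\Phi = \phi\circ\mathbf{h}$ comes from inspecting the construction in the proof of Lemma~\ref{thm:deep-set-final-step}, where $\phi$ is the DeepSets map $\gamma\colon\R^a\to\R^t$ applied after the sketch encoding. Your observation about absorbing the $\|h(x)\|_1$ normalization into a redefined continuous $h$ is also the right way to reconcile the precise form in Theorem~\ref{lemma:point-sketch} with the statement of the corollary.
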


Due to Eqn. (\ref{eqn:hdecomp}), instead of considering the function $\mathbf{h}$ which takes a set of points $S \in \mathcal{X}$ as input, we now only need to model the function $h: \Omega \to \R^a$, which takes {\bf a single point $x\in S$} as input. 
For simplicity, assume that the input metric space $(\Omega, d_\Omega)$ is a compact set in some Euclidean space $\R^d$. 
Note that in contrast to Lemma \ref{thm:deep-set-final-step}, each $h$, $\phi$ and $\rho$ is simply a continuous function, and there is no further group invariance requirement. 
Furthermore, all the dimensions of the domain and range of these functions are bounded values that depend only on the covering number of $\Omega$, the target additive error $\epsilon$, and independent to the maximum size $N$ of input points. 
We can use multilayer perceptrons (MLPs) $h_{\theta_1}$, $\phi_{\theta_2}$, and $\rho_{\theta_3}$ in place of $h$, $\phi$ and $\rho$ to approximate the desired function.
Formally, we define the following family of neural networks: 
\begin{equation}
\label{eq:network-definition}
    \Network(A, B) = \rho_{\theta_3} \Big( \phi_{\theta_2} \Big( \sum_{(x, w_x) \in A} w_xh_{\theta_1}(x)\Big) + \phi_{\theta_2} \Big( \sum_{(x, w_x) \in B} w_xh_{\theta_1}(x) \Big) \Big).
\end{equation}
In practice, we consider the input to the the neural network $h_{\theta_1}$ to be a point $x \in \Omega$ along with its weight $w_x$. As per the discussions above, functions represented by $\Network$ can universally approximate \SFGI{} product functions on the space of point sets.
See Figure \ref{fig:product-network} for an illustration of our universal architecture for approximating product functions on point sets. 
As $p$-Wasserstein distances, $\mathrm{W}_p: \mathcal{X} \times \mathcal{X} \to \R$, are uniformly continuous with respect to the underlying metric $\mathrm{W}_p$, we can apply our framework for the problem of approximating $p$-Wasserstein.

Importantly, the number of parameters in $\Network$ does not depend on the maximum size of the point set but rather only on the $\epsilon$ additive error and by extension, the covering number of the original metric space.
This is because the encoding function for our sketch is defined as the summation of single points into $\R^a$ where $a$ is independent of the size of the input set. Contrast this result with latent dimension in the universality statement of DeepSets (cf. Theorem \ref{thm:sum-decomposition}), which is dependent on the input point set size. 
Note that in general, the model complexities of the MLPs $\rho_{\theta_3}$, $\phi_{\theta_2}$, and $h_{\theta_1}$ depend on the dimensions of the domain and co-domain of each function they approximate ($\rho$, $\phi$ and $h$) and the desired approximation error $\epsilon$.
We assume that MLPs are Lipschitz continuous. 
In our case, $\phi_{\theta_2}$ operates on the 
sum of $h_{\theta_1}(x)$s for all $N$ number of input points in $A$ (or in $B$). In general, the error made in $h$s may accumulate $N$ times, which causes the precision we must achieve for each individual $h_{\theta_1}(x)$ (compared to $h(x)$) to be $\Theta(\epsilon / N)$. This would have caused the model complexity of $h_{\theta_1}$ to depend on $N$.  
Fortunately, this is not the case for our encoding function $h$. In particular, because our encoding function can be written in the specific form of a normalized sum of individual points in the set $S$;  i.e, $\phi_{\theta_2}$ operates on $\sum_{(x,w_x)\in S} w_x h_{\theta_1}(x)$ with $\sum_x w_x = 1$, the error accumulated by the normalized sum will be less than the maximum error from any single point $h(x)$ for $x\in S$. Thus, as both the error for each MLP and the dimension of the domain and co-domain of each approximated function $(\rho, \phi, h)$ do not depend on the size of the input point set $N$, we get that $\rho_{\theta_3}$, $\phi_{\theta_2}$ and $h_{\theta_1}$ each have model complexity independent of the size of the input point set.
In short, because the encoding and decoding functions of our sketch can be approximated with neural networks with model complexity \textit{independent} of input point set size, we are able to achieve a low model complexity neural network which can approximate Wasserstein distance arbitrarily well. 
Note that in general, we are not guaranteed to be able to find a sketch which can be approximated by neural networks which are independent of input size.
Finally, this neural network architecture can also be applied to approximating Hausdorff distance. More details regarding Hausdorff distance approximation are available in Appendix \ref{appendix:hausdorff}.
We summarize the above results in the following corollary. 

\begin{corollary}\label{cor:goodNN}
There is a network in the form of $\Network$ which can approximate the $p$-Wasserstein distance between point sets to within an additive $\epsilon$-error. The number of parameters in this network depends on $\epsilon$ and the covering number of $\Omega$ and not on the size of each point set.

\begin{figure}
    \centering
    \includegraphics[scale=0.50]{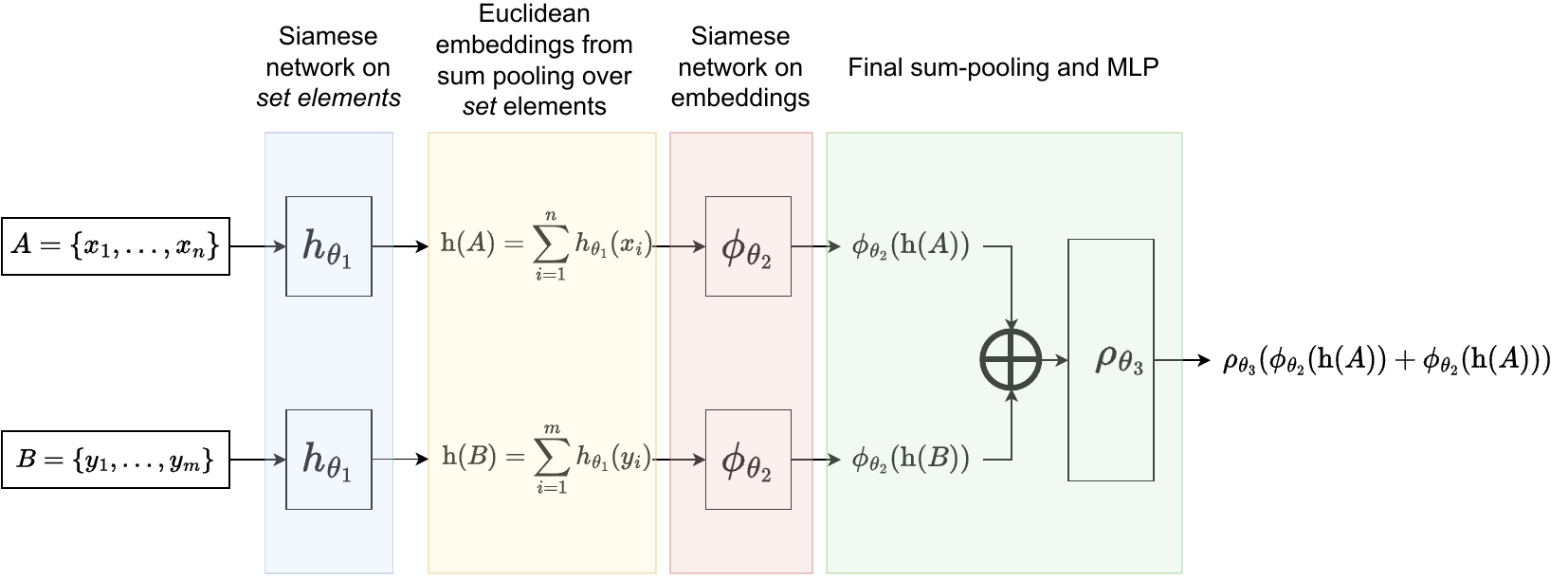}
    \caption{{\small Visually representing a neural network which can universally approximate uniformly continuous \SFGI{} product functions over pairs of point sets.} }
    \label{fig:product-network}
\end{figure}
 
Additionally, if we replace the sum-pooling with $\max$ in $\Network$, there is a network of such a form where we can also approximate Hausdorff distance between point sets to additive $\epsilon$ accuracy.
\end{corollary}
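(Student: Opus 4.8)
The plan is to plug $\mathrm{W}_p$ into Corollary~\ref{corollary:product-net-universality} and then replace the three continuous maps it produces with MLPs, the only delicate point being to track how the approximation errors propagate so that the final parameter count is independent of $N$. First, observe that $\mathrm{W}_p:\mathcal X\times\mathcal X\to\R$ is symmetric and, being $1$-Lipschitz in each argument (triangle inequality for $\mathrm{W}_p$), uniformly continuous; it is also permutation invariant in each factor, so it is a uniformly continuous \SFGI{} product function on the space of weighted point sets. Fixing $\epsilon>0$, Corollary~\ref{corollary:product-net-universality} applied with target error $\epsilon/2$ yields continuous $h:\Omega\to\R^{a}$, $\phi:\R^{a}\to\R^{a'}$, $\rho:\R^{a'}\to\R$, where $a=a(\delta)$ is the covering number of $\Omega$ at the scale $\delta$ dictated by $\epsilon$ and $a'=4a+1$, such that
\[
  \Bigl|\mathrm{W}_p(A,B)-\rho\bigl(\phi(\mathbf h(A))+\phi(\mathbf h(B))\bigr)\Bigr|<\tfrac{\epsilon}{2},\qquad \mathbf h(S):=\sum_{(x,w_x)\in S}w_x\,h(x) ,
\]
for all $A,B\in\mathcal X$.

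Next I would substitute MLPs $h_{\theta_1},\phi_{\theta_2},\rho_{\theta_3}$ for $h,\phi,\rho$ using the classical universal approximation theorem, which applies because each map is invoked only on a compact set: $\Omega$ for $h$; a fixed compact set $K$ containing the convex hull of $h_{\theta_1}(\Omega)$ for $\phi$ (this is where every $\mathbf h(S)$ and every $\sum_x w_x h_{\theta_1}(x)$ lives, being a convex combination of the relevant vectors since $\sum_x w_x=1$); and the compact set $\phi(K)+\phi(K)$ for $\rho$. The crucial observation — exactly the one spelled out in the paragraph preceding the corollary — is that because the encoded vector is a \emph{convex} combination, a uniform error $\eta$ in approximating $h$ by $h_{\theta_1}$ contributes at most $\eta$, not $N\eta$, to $\bigl\|\sum_x w_x h_{\theta_1}(x)-\mathbf h(S)\bigr\|$, regardless of $|S|$. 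Choosing the MLPs' precisions in the right order (a crude $h_{\theta_1}$ first to pin down $K$, then $\phi_{\theta_2},\rho_{\theta_3}$, then re-tightening $h_{\theta_1}$) and using that $\phi_{\theta_2},\rho_{\theta_3}$ are Lipschitz, a short triangle-inequality computation bounds the total error introduced by the substitution by $L_\rho\bigl(2L_\phi\eta+2\eta\bigr)+\eta$, which is $<\epsilon/2$ once $\eta$ is small enough. Since the dimensions $a,a'$ and the required precision $\eta$ depend only on $\epsilon$, on $d$, and on the covering number of $\Omega$ — never on $N$ — the network $\Network$ assembled from these MLPs approximates $\mathrm{W}_p$ to additive error $\epsilon$ with an $N$-independent number of parameters, establishing the first two sentences.

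For the Hausdorff claim I would instead build a \emph{max-pooling} sketch: keeping the soft-indicator coordinates $h_i(x)=e^{-b_0\,\mathrm d_\Omega(x,B_{\delta_0}(y_i))}$ of Theorem~\ref{lemma:point-sketch}, set $\mathbf h_{\max}(S):=\max_{x\in S}h(x)$ coordinatewise, so that the $i$-th coordinate is close to $1$ precisely when some point of $S$ lies within $\delta_0$ of the net point $y_i$; decoding to (a continuous soft version of) the set of activated net points recovers a set that is $\delta$-close to $S$ in the Hausdorff metric, since the net is $\delta_0$-fine and only net points near $S$ are activated. Inserting this sketch into the max-pooling analogue of the decomposition in Lemma~\ref{thm:deep-set-final-step} and repeating the MLP-substitution argument above — now with $\max$ (which is $1$-Lipschitz, so the same $N$-free error propagation goes through) in place of the sums in $\Network$ — yields the Hausdorff approximation; the full construction is deferred to Appendix~\ref{appendix:hausdorff}.

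The step I expect to be the main obstacle is the error-propagation bookkeeping in the second paragraph: one must verify that the convex-combination form of the encoding genuinely removes the $N$-dependence rather than merely relocating it into the precision demanded of $h_{\theta_1}$, and that every intermediate domain on which an MLP is invoked is actually a fixed compact set independent of $N$ (so Cybenko's theorem applies with $N$-independent size) — both of which hinge on the specific normalized-sum structure of the sketch produced by Theorem~\ref{lemma:point-sketch}.
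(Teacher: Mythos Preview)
Your proposal is correct and follows essentially the same approach as the paper: the corollary is a summary of the discussion preceding it, which invokes Corollary~\ref{corollary:product-net-universality} for $\mathrm{W}_p$, replaces $h,\phi,\rho$ by MLPs, and uses the convex-combination structure of the encoding to keep the error propagation (and hence parameter count) independent of $N$, with the Hausdorff case handled by the max-pooling sketch of Appendix~\ref{appendix:hausdorff}. If anything, your bookkeeping on compact domains and the order in which the MLP precisions are chosen is more careful than what the paper spells out.
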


\paragraph{Exponential dependence on dimension}
Although the model complexity of $\mathcal{N}_{\mathrm{ProductNet}}$ is independent of the size of the input point set, it depends on the covering number of $\Omega$ which, in turn, can have an exponential dependence on the dimension of $\Omega$. 
In short, this means that the model complexity of $\mathcal{N}_{\mathrm{ProductNet}}$ has an exponential dependence on the dimension of $\Omega$. 
However, in practice, many machine learning tasks (e.g. 3D point processing) involve large point sets sampled from low-dimensional space $(d = 3)$. 
Furthermore, in general, the covering number of $\Omega$ will depend on the \textit{intrinsic dimension} of $\Omega$ rather than the \textit{ambient dimension. }
For instance, if the input point sets are sampled from a hidden manifold of dimension $d'$ (where $d'$ which is much lower than the ambient dimension $d$), then the covering number would depend only on $d'$ and the curvature bound of the manifold. 
In many modern machine learning applications, it is often assumed that the data is sampled from a hidden space of low dimension (the manifold hypothesis) although the ambient dimension might be very high. 

\paragraph{Using max-pooling instead of sum-pooling.} Observe that in the final step of combining the Euclidean outputs for two point sets $A, B \in \mathcal{X}$, $\sum_{(x, w_x)\in A}w_x h_{\theta_1}(x) + \sum_{(x, w_x) \in B}w_x h_{\theta_1}(x)$, we use the sum of these two components (as in a DeepSet architecture) : $\sum_{(x, w_x) \in A}w_x h_{\theta_1}(x)$ and $\sum_{(x, w_x) \in B}w_x h_{\theta_1}(x)$, to ensure the symmetric condition of \SFGI{} product functions. One could replace this final sum with a final max such as in PointNet. However, to show that PointNets are able to universally approximate continuous functions $F: K \to \R$ where $K \subseteq \R^{a \times 2}$ is compact, we need to use a $\delta$-net for $K$ which will also serve as the intermediate dimension for $\phi$. As $K \subseteq [0, N]^{a \times 2}$ in our case (where $N$ is the maximum cardinality for a point set), the intermediate dimension for a max-pooling invariant architecture at the end (i.e. PointNet) now depends on the maximum size of input point sets. 

\begin{table}[t]
    \centering
    \caption{Mean relative error between approximations and ground truth Wasserstein distance between point sets. The top row for each dataset shows the approximation quality for point sets with input sizes that were seen at training time; while he bottom row shows the approximation quality for point sets with input sizes that were not seen at training time. Note that $\mathcal{N}_{\mathrm{ProductNet}}$is our model.}
    \small
    \begin{tabular}{lccccc}
          \toprule
          Dataset & Input size & $\Network$ & WPCE & $\mathcal{N}_{\mathrm{SDeepSets}}$ & Sinkhorn  \\
          \midrule
          \multirow{2}{*}{noisy-sphere-3} & [100, 300] & \textcolor{red}{\textbf{0.046 $\pm$ 0.043}} & 0.341 $\pm$ 0.202 & 0.362 $\pm$ 0.241 & 0.187 $\pm$ 0.232\\
         & [300, 500] & \textcolor{red}{\textbf{0.158 $\pm$ 0.198}} & 0.356 $\pm$ 0.286 & 0.608 $\pm$ 0.431 & 0.241 $\pm$ 0.325\\
         \midrule
         \multirow{2}{*}{noisy-sphere-6} & [100, 300] & \textcolor{red}{\textbf{0.015 $\pm$ 0.014}} & 0.269 $\pm$ 0.285 & 0.291 $\pm$ 0.316 & 0.137 $\pm$ 0.122\\
         & [300, 500] & \textcolor{red}{\textbf{0.049 $\pm$ 0.054}} & 0.423 $\pm$ 0.408 & 0.508 $\pm$ 0.473 & 0.198 $\pm$ 0.181\\
         \midrule
         \multirow{2}{*}{uniform} & 256 & \textcolor{red}{\textbf{0.097 $\pm$ 0.073}} & 0.120 $\pm$ 0.103 & 0.123 $\pm$ 0.092 & 0.073 $\pm$ 0.009\\
         & [200, 300] & \textcolor{red}{\textbf{0.131 $\pm$ 0.096}} & 1.712 $\pm$ 0.869 & 0.917 $\pm$ 0.869 & 0.064 $\pm$ 0.008\\
         \midrule
         \multirow{2}{*}{ModelNet-small} & [20, 200] & 0.084 $\pm$ 0.077 & \textcolor{red}{\textbf{0.077 $\pm$ 0.075}} & 0.105 $\pm$ 0.096 & 0.101 $\pm$ 0.032\\
         & [300, 500] & \textcolor{red}{\textbf{0.111 $\pm$ 0.086}} & 0.241 $\pm$ 0.198 & 0.261 $\pm$ 0.245 & 0.193 $\pm$ 0.155\\
         \midrule
         \multirow{2}{*}{ModelNet-large} & 2048 & \textcolor{red}{\textbf{0.140 $\pm$ 0.206}} & 0.159 $\pm$ 0.141 & 0.166 $\pm$ 0.129 & 0.148 $\pm$ 0.048\\
         & [1800, 2000] & \textcolor{red}{\textbf{0.162 $\pm$ 0.228}} &  0.392 $\pm$ 0.378 & 0.470 $\pm$ 0.628 & 0.188 $\pm$ 0.088\\
         \midrule
         \multirow{2}{*}{RNAseq} & [20, 200] & \textbf{\textcolor{red}{0.012 $\pm$ 0.010}} & 0.477 $\pm$ 0.281 & 0.482 $\pm$ 0.291 & 0.040 $\pm$ 0.009 \\
         & [300, 500] & 0.292 $\pm$ 0.041 &0.583 $\pm$ 0.309  & 0.575 $\pm$ 0.302 &  \textbf{\textcolor{red}{0.048 $\pm$ 0.006}}\\
         \bottomrule
    \end{tabular}
    \label{tab:Wasserstein-error}
\end{table}

\begin{table}[t]
    \centering
    \caption{Training time and number of epochs needed for convergence for best model} 
    \begin{tabular}{llccc}
          \toprule
          Dataset &  & $\Network$ & WPCE & $\mathcal{N}_{\mathrm{SDeepSets}}$ \\
          \midrule
          \multirow{2}{*}{noisy-sphere-3} & Time & 6min & 1hr 46min & 9min \\
         & Epochs & 20 & 100 & 100 \\
         \midrule
         \multirow{2}{*}{noisy-sphere-6} & Time & 12min & 4hr 6min & 1hr 38min\\
         & Epochs & 20 & 100 & 100 \\
         \midrule
         \multirow{2}{*}{uniform} & Time & 7min & 3hr 36min & 1hr 27min\\
         & Epochs & 23 & 100 & 100 \\
         \midrule
         \multirow{2}{*}{ModelNet-small} & Time & 7min & 1hr 23min & 12 min  \\
         & Epochs & 20 & 100 & 100\\
         \midrule
         \multirow{2}{*}{ModelNet-large} & Time & 8min & 3hr 5min & 40min \\
         & Epochs & 20 & 100 & 100\\
         \midrule
         \multirow{2}{*}{RNAseq(2k)} & Time & 15min & 14hr 26min & 3hr 01min\\
         & Epochs & 73 & 100 & 100\\
         \bottomrule
    \end{tabular}
    \label{tab:training-time}
\end{table}

\section{Experimental results}
\label{section:experiments}
We evaluate the accuracy of the 1-Wasserstein distance approximations of our proposed neural network architecture, {\bf $\Network$}, against two different baseline architectures: (1) a Siamese autoencoder known as the Wasserstein Point Cloud Embedding network (WPCE) \citep{kawano2020learning} (previously introduced at the end of Section \ref{section:introduction} and is a SOTA method of neural approximation of Wasserstein distance) and (2) a Siamese DeepSets, denoted as $\mathcal{N}_{\mathrm{SDeepSets}}$, which is a single DeepSets model which maps both point sets to a Euclidean space and approximates the 1-Wasserstein distance as the $\ell_2$ norm between output of each point set. As Siamese networks are widely employed for metric learning, $\mathcal{N}_{\mathrm{SDeepSets}}$ model is a natural baseline for comparison again $\Network$. We additionally test our neural network approximations against the Sinkhorn distance where the regularization parameter was set to $\epsilon=0.1$. For each model, we record results for the best model according to a hyperparameter search with respect to the parameters of each model. Finally, we use the ModelNet40 \citep{wu20153d} dataset which consists of point clouds in $\R^3$ and a gene expression dataset (RNAseq) which consists of 4360 cells each represented by 2000 genes (i.e. 4360 points in $\R^{2000}$) as well as three synthetic datasets: (1) uniform, where point sets are in $\R^2$, (2) noisy-sphere-3, where point sets are in $\R^3$, (3) noisy-sphere-6, where point sets are in $\R^6$. The RNAseq dataset is publicly available courtesy of the Allen institute \citep{yao2021taxonomy}. Additional details and experiments approximating the 2-Wasserstein distance are available in Appendix \ref{appendix:experiments}.

\paragraph{Approximating Wasserstein distances.}
Our results comparing 1-Wasserstein distance approximations are summarized in Table \ref{tab:Wasserstein-error}. Additionally, see Table \ref{tab:computation-time} for a summary of time needed for training. For most datasets, $\Network$ produces more accurate approximations of Wasserstein distances for both input point sets seen at training time and for input point sets unseen at training time. For the high dimensional RNAseq dataset, our approximation remains accurate in comparison with other methods, including the standard Sinkhorn approximation for input point sets seen at training time. The only exception is ModelNet-small, where the $\Network$ approximation error is slightly larger than WPCE for input point set sizes using during training (top row for each dataset in Table \ref{tab:Wasserstein-error}). However, for point sets where the input sizes were not used during training (bottom row for each dataset in Table \ref{tab:Wasserstein-error}), $\Network$ showed siginificantly lower error than all other methods including WPCE. These results along with a more detailed plot in Figure \ref{fig:generalization_graph} in Appendix \ref{appendix:experiments} indicate that $\Network$ generalizes better than WPCE to point sets of input sizes unseen at training time. Also, see Appendix \ref{appendix:experiments} for additional discussion about generalization. Furthermore, one major advantage of $\Network$ over WPCE is the dramatically reduced time needed for training (cf. Table \ref{tab:training-time}). 
This substantial difference in training time is due to WPCE's usage of the Sinkhorn reconstruction loss as the $O(n^2)$ computation time for the Sinkhorn distance can be prohibitively expensive as input point set sizes grow. 
Thus, our results indicate that, compared to WPCE, $\Network$ can reduce training time while still achieving comparable or better quality approximations of Wasserstein distance. 
Using our $\Network$, we can produce high quality approximations of 1-Wasserstein distance while avoiding the extra cost associated with using an autoencoder architecture and Sinkhorn regularization. Finally, all models produce much faster approximations than the Sinkhorn distance (see 
Tables \ref{tab:computation-time} and \ref{tab:w2-computation-time} in Appendix \ref{appendix:experiments}). In summary, as compared to WPCE, our model is more accurate in approximating both 1-Wasserstein distance, generalizes better to larger input point set sizes, and is more efficient in terms of training time.

\section{Concluding Remarks}
Our work presents a general neural network framework for approximating \SFGI{} functions which can be combined with geometric sketching ideas into a specific and efficient neural network for approximating $p$-Wasserstein distances. 
We intend to utilize $\Network$ as an {\bf accurate, efficient, and differentiable} approximation for Wasserstein distance in downstream machine learning tasks where Wasserstein distance is employed, such as loss functions for aligning single cell multi-omics data \citep{demetci2020gromov} or compressing energy profiles in high energy particle colliders \citep{di2021reconfigurable, shenoyemd}. 
Beyond Wasserstein distance, we will look to apply our framework to a wide array of geometric problems that can be considered \SFGI{} functions and are desireable to approximate via neural networks.
For instance, consider the problems of computing the optimal Wasserstein distance under rigid transformation or the Gromov-Wasserstein distance, which both can be represented as an \SFGI{} function where the factor-wise group invariances include both permutation and rigid transformations. 
Then our sketch must be invariant to both permutations and orthogonal group actions on the left.
It remains to be seen if there is a neural network architecture which can approximate such an \SFGI{} function to within an arbitrary additive $\epsilon$-error where the complexity does not depend on the maximum size of the input set.

\section{Acknowledgements}
The authors thank anonymous reviewers for their helpful comments.
Furthermore, the authors thank Rohan Gala and the Allen Institute for generously providing the RNAseq dataset. 
Finally, Samantha Chen would like to thank Puoya Tabaghi for many helpful discussions about permutation invariant neural networks and Tristan Brug\`ere for his implementation of the Sinkhorn algorithm. 
This research is supported by National Science Foundation (NSF) grants CCF-2112665 and CCF-2217058 and National Institutes of Health (NIH) grant RF1 MH125317.

\bibliography{main.bib}

\newpage

\appendix

\section{Approximating Hausdorff distance}
\label{appendix:hausdorff}
Suppose the underlying metric for $\mathcal{X}$ (the space of weighted point sets) is Hausdorff distance, $\mathrm{d}_H$. To clarify, given $S_1 = \{(x_i, w_i) : x_i \in \R^d, \sum_{i = 1}^n w_i\}$ and $S_2 = \{(y_i, w_i': y_i \in \R^d, \sum_{i = 1}^m w_i'\}$, 
\begin{equation*}
    \mathrm{d}_H(S_1, S_2) = \mathrm{d}_H(\{x_1, \dots, x_n\}, \{y_1, \dots, y_m\}).
\end{equation*}
To get an encoding/decoding for $\mathcal{X}$ equipped with Hausdorff distance using max-pooling, we will use the construction given in the proof of universality in \citep{qi2017pointnet}, which we will briefly recap here.
As in Lemma \ref{lemma:point-sketch}, we consider a $\delta$-net for our original space $X$, $\{y_1, \dots, y_a\}$. For each $y_i$, we will again let $h_i(x) = e^{-\mathrm{d}_X(x, B_{\delta}(y_i))}$ and define $h: X \to \R^a$ as $h(x) = [h_1(x), \dots, h_a(x)]$. However, instead of using sum-pooling to define $\mathbf{h}: \mathcal{X} \to \R^a$, we will use max-pooling as
\begin{equation*}
    \mathrm{h}(S) = \max_{x \in S} h(x)
\end{equation*}
where $\max$ is element-wise max.
Then the decoding function $g: \R^a \to \mathcal{X}$ is defined as 
\begin{equation*}
    g(v) = \{ (y_i, \frac{v_i}{\|v\|_1}) : v_i \geq 1\}
\end{equation*}
Note that $v_i \geq 1$ when there is an $x_i \in S$ such that $x_i \in B_{\delta}(y_i)$. So for each $x_i \in S$ there is $y_i \in g \circ h(S)$ such that $\mathrm{d}_X(x_i, y_i) < \delta$. Thus, $\mathrm{d}_H(S, g\circ h(S)) < \delta$.

The final $\Network$ for approximating uniformly continuous functions for Hausdorff distance with max-pooling is
\begin{equation*}
    \Network^{\max} (A, B) = \rho_{\theta_3}\Bigg( \phi_{\theta_2}\Big(\max_{x \in A}(h_{\theta_1}(x))\Big) + \phi_{\theta_2} \Big(\max_{y \in B}(h_{\theta_1}(y))\Big)\Bigg).
\end{equation*}
Note that $\Network^{\max}$ is able to approximate Hausdorff distance to $\epsilon$-accuracy.

\section{Comparison with other 
neural network architectures for approximating Wasserstein distance} 
\label{subsec:comparisonNNs}
Through the lens of 1-Wasserstein approximation, we compare the power of $\Network$ with other natural architectures, namely standard MLPs and Siamese networks. The standard MLP and Siamese networks are the natural points of comparison for $\Network$ as (1) MLPs are the go-to choice in deep learning due to their versatility and efficacy across diverse problem domains and (2) Siamese networks have gained significant prominence in the context of metric learning, frequently employed for tasks aimed at learning effective similarity metrics between pairs of inputs\citep{hadsell2006dimensionality, bromley1993signature, koch2015siamese}
However, we discuss the limitations of both the standard MLP and Siamese networks for the taks of learning Wasserstein distances. For this setting, we will let $(X, \mathrm{d}_X)$ be some compact Euclidean space, i.e. $X \subseteq \R^d$ is compact and $\mathrm{d}_X$ is some $\ell_p$ norm.
Then $\mathcal{X}$ be the set of weighted point sets of cardinality at most $N$ i.e. $S = \{(x_i, w_i) : w_i \in \R_+, \sum_i w_i = 1, x_i \in X\}$. 


\paragraph{Comparison with multilayer perceptrons (MLPs).} 
First, we consider a standard MLP model where the model takes a single tensor as input. 
 Note that for each $S$, there is an $\tilde{S} \in \mathcal{X}$ such that 
\begin{equation*}
   \tilde{S} = \{(x_i, w_i) : (x_i, w_i) \in S\} \cup \{(x_1, 0)_{\times (N- |S|)}\}
\end{equation*}
where $(x_1, 0)_{\times(N - |S|)}$ means that we repeat the element $(x_1, 0)$ $(N - |S|)$ times. Notice that $\mathrm{W}_p(S, \tilde{S}) = 0$ and for any $S' \in \mathcal{X}$, $\mathrm{W}_p(S, S') = \mathrm{W}_p(\tilde{S}, S')$. If $|S|=N$, $\tilde{S} = S$
To use an MLP model to approximate Wasserstein distance, we want to represent any $(S_1, S_2) \in \mathcal{X} \times \mathcal{X}$ as an element of $\R^{(d + 1) \times 2N}$ where $S_1 = \{(x_i, w_i): w_i \in \R_+, \sum_i w_i = 1, x_i \in X\}$ and $S_2 = \{(x_i', w_i'): w_i' \in \R_+, \sum_i w_i' = 1, x_i' \in X\}$. 
Informally, given any ``empty" element in $\R^{(d + 1) \times 2N}$, we will map $S_1$ to the first $N$ columns and map $S_2$ to the last $N$ columns. 
Formally, we will define the mapping $\beta: \mathcal{X} \times \mathcal{X} \to \R^{(d + 1) \times 2N}$ given $S_1$ and $S_2$ as defined above as 
\begin{equation*}
    \beta(S_1, S_2) = \beta(\tilde{S}_1, \tilde{S}) = \begin{bmatrix}
        x_1 & \cdots & x_N & x_1' & \cdots & x_N' \\
        w_1 & \cdots & w_N & w_1' & \cdots & w_N'
    \end{bmatrix}
\end{equation*}
where we abuse notation slightly and use $(x_i, w_i)$ and $(x_i', w_i')$ to describe the elements in $\tilde{S}_1$ and $\tilde{S}_2$ respectively. Given $M \in \R^{(d + 1) \times 2N}$, define $\beta^{-1}: \mathrm{Image}(\beta) \to \mathcal{X} \times \mathcal{X}$ as
\begin{equation*}
    \beta^{-1}\Bigg(\begin{bmatrix}
        x_1 & \cdots & x_N & x_1' & \cdots & x_N' \\
        w_1 & \cdots & w_N & w_1' & \cdots & w_N'
    \end{bmatrix} \Bigg) = (\{(x_i, w_i)\}, \{(x_i', w_i')\})
\end{equation*}
Now, in order to use the classical universal approximation theorem to approximate $\mathrm{W}_p$ to an $\epsilon$-approximation error, we must show that there is a continuous function $f:\mathrm{Image}(\beta) \to \R$ such that $\mathrm{W}_p(S_1, S_2) = f(\beta(S_1, S_2))$. 
\begin{lemma}
\label{lemma:mlp-wasserstein}
    Given $\beta: \mathcal{X} \times \mathcal{X} \to \R^{(d+ 1) \times 2N}$ as defined previously, there is a continuous function $f: \mathrm{Image}(\beta) \to \R$ such that for any $\epsilon > 0$, and for any $(S_1, S_2 )\in \mathcal{X} \times\mathcal{X}$, $|\mathrm{W}_p(S_1, S_2) - f(\beta(S_1, S_2))| < \epsilon$. 
\end{lemma}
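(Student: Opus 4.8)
The plan is to define $f$ to be ``$\mathrm{W}_p$ read through $\beta^{-1}$''. Although $\beta$ is neither injective nor even single-valued (it involves an arbitrary ordering of the columns and an arbitrary padding with zero-weight copies of a point of $S_1$, resp.\ $S_2$), the map $\beta^{-1}$ written in the statement \emph{is} a genuine function on $\mathrm{Image}(\beta)$: it reads off the $2N$ columns of the matrix and returns the two weighted multisets formed by the first $N$ and the last $N$ columns. So I would set $f := \mathrm{W}_p \circ \beta^{-1}$, i.e.\ $f(M) = \mathrm{W}_p\big(\{(x_j,w_j)\}_{j\le N},\{(x_j',w_j')\}_{j>N}\big)$ for $M\in\mathrm{Image}(\beta)$. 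Since adding zero-weight atoms does not change a weighted point set's $p$-Wasserstein distance to anything, we have $\beta^{-1}(\beta(S_1,S_2))=(\tilde S_1,\tilde S_2)$ and hence $f(\beta(S_1,S_2))=\mathrm{W}_p(\tilde S_1,\tilde S_2)=\mathrm{W}_p(S_1,S_2)$ \emph{exactly}; in particular the claimed $\epsilon$-inequality holds for every $\epsilon>0$ trivially. Thus the whole content of the lemma is the \emph{continuity} of $f$.

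Because $\mathrm{W}_p$ is a metric on $\mathcal{X}$, it is continuous on $\mathcal{X}\times\mathcal{X}$ with the product metric (any metric $d$ satisfies $|d(a,b)-d(a',b')|\le d(a,a')+d(b,b')$), so by composition it suffices to show that $\beta^{-1}:\mathrm{Image}(\beta)\to\mathcal{X}\times\mathcal{X}$ is continuous, where $\mathrm{Image}(\beta)$ carries the subspace topology from $\R^{(d+1)\times 2N}$ and each copy of $\mathcal{X}$ carries $\mathrm{W}_p$. This reduces to a quantitative estimate: if $M,M'\in\mathrm{Image}(\beta)$ have columns $(x_j,w_j)$ and $(x_j',w_j')$, I want to bound $\mathrm{W}_p\big(\{(x_j,w_j)\}_{j\le N},\{(x_j',w_j')\}_{j\le N}\big)$ by a quantity tending to $0$ as $\|M-M'\|\to 0$ (and likewise for the last $N$ columns). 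I would get this via the triangle inequality through the intermediate measure $\mu'$ with positions $x_j'$ and weights $w_j$: transporting $(x_j,w_j)$ to $(x_j',w_j)$ along the identity matching gives $\mathrm{W}_p(\mu,\mu')\le(\sum_{j\le N} w_j\|x_j-x_j'\|^p)^{1/p}\le\max_{j\le N}\|x_j-x_j'\|$ (using $\sum_j w_j=1$), while $\mu'$ and $\{(x_j',w_j')\}_{j\le N}$ are supported on the same points and differ in total variation by $\tfrac12\sum_{j\le N}|w_j-w_j'|$, so moving that excess mass across distance at most $\mathrm{diam}(X)$ gives $\mathrm{W}_p(\mu',\{(x_j',w_j')\})\le \mathrm{diam}(X)\,(\tfrac12\sum_{j\le N}|w_j-w_j'|)^{1/p}$. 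Summing the two halves yields a (Hölder-type) modulus of continuity for $\beta^{-1}$ in terms of $\|M-M'\|$, and composing with the continuity of $\mathrm{W}_p$ finishes the proof.

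The one genuine subtlety — and the main obstacle — is precisely that $\beta$ is many-to-one and not a function, so one must resist the temptation to literally ``invert'' it and instead argue carefully that the displayed formula for $\beta^{-1}$ defines a well-defined continuous map on the image, and that $f\circ\beta=\mathrm{W}_p$ independently of which representative matrix was chosen (reordering columns or choosing a different zero-weight padding changes $M$ but not the value of $f$, since $\mathrm{W}_p$ is permutation invariant and insensitive to zero-weight atoms). A secondary point worth recording is that $\mathrm{Image}(\beta)$ lies inside the compact set $\{M\in\R^{(d+1)\times 2N}: x_j\in X,\ w_j\ge 0,\ \sum_{j\le N}w_j=\sum_{j>N}w_j=1\}$, on which the very same formula defines a continuous extension of $f$; this is what makes the classical universal approximation theorem applicable downstream. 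The weight-perturbation estimate is only $1/p$-Hölder rather than Lipschitz, but for continuity that is harmless.
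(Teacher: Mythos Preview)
Your proof is correct, but it takes a genuinely different route from the paper's. You define $f := \mathrm{W}_p \circ \beta^{-1}$ directly and then establish continuity of $\beta^{-1}$ by a hands-on Wasserstein estimate (splitting into a position-perturbation term controlled by the identity coupling and a weight-perturbation term controlled by total variation times diameter). This yields \emph{exact} equality $f(\beta(S_1,S_2))=\mathrm{W}_p(S_1,S_2)$, so a single $f$ works for all $\epsilon$, which matches the quantifier order in the lemma as stated.

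The paper instead invokes Corollary~\ref{corollary:product-net-universality}: given $\epsilon$, it takes the continuous maps $h,\phi,\rho$ produced by the sketch/ProductNet construction and sets $f=\rho\circ f'$ where $f'(M)=\phi\big(\sum_{i\le N}M[d,i]\,h(M[{:}d,i])\big)+\phi\big(\sum_{i>N}M[d,i]\,h(M[{:}d,i])\big)$. Continuity is then immediate from continuity of $h,\phi,\rho$, and the $\epsilon$-closeness is inherited from the corollary. So the paper's $f$ depends on $\epsilon$ and only achieves the approximate inequality, whereas yours is $\epsilon$-free. What the paper's route buys is that its $f$ is already written as a composition of maps of the ProductNet form, tying the lemma back to the paper's main construction; what your route buys is that it is self-contained, more elementary, and actually delivers the stronger statement literally claimed. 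Your remark that the same formula extends continuously to the ambient compact set $\{M: x_j\in X,\ w_j\ge 0,\ \sum_{j\le N}w_j=\sum_{j>N}w_j=1\}$ is also a useful addition, since it is this extension (not $f$ on $\mathrm{Image}(\beta)$ alone) that feeds into the classical MLP universal approximation theorem downstream.
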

\begin{proof}
    Let $\epsilon > 0$. From Corollary \ref{corollary:product-net-universality}, there is a $\delta > 0$ such that are continuous $h: X \to \R^{a(\delta)}$, $\phi: \R^{a(\delta)} \to \R^{a'}$, and $\rho: \R^{a'} \to \R$, such that for any $A, B \in \mathcal{X}$
    \begin{equation*}
        \Bigg|\mathrm{W}_p(A, B) - \rho\Big(\phi\Big(\sum_{x \in A} w_x h(x)\Big) + \phi\Big(\sum_{x \in B}w_x h(x)\Big)\Big) \Bigg| < \epsilon.
    \end{equation*}
    Let $(S_1, S_2) \in \mathcal{X} \times \mathcal{X}$ and let $M = \beta(S_1, S_2)$. Note that $\beta^{-1}(M) = (S_1, S_2)$.
    Let $M \in \mathrm{Image}(\beta)$, let $M[:, i]$ denote the $i$th column vector of $M$, let $M[:d, i]$ denote vector of the first $d$ elements in the $i$th column vector of $M$ and let $M[i, j]$ denote the $ij$th entry of $M$. We know that $M[:, i] \in \R^{d + 1}$. Since $M[:, i] \in \mathrm{Image}(\beta)$, we know that $M[:d, i] \in X$ and $M[d, i]$ corresponds to a weight.
    Furthermore, since $h$ is a continuous mapping from $X \to \R$, $M[d, i]h(M[:d, i])$ is continuous. Thus, since $\phi$ is also continuous, the function $f': \mathrm{Image}(\beta) \to \R^{a'}$
    \begin{equation*}
        \phi(\sum_{i = 1}^N M[d, i]h(M[:d, i]) ) + \phi(\sum_{i = N + 1}^{2N} M[d, i]h(M[:d, i]))
    \end{equation*}
     is continuous. Since $\rho$ is also continuous, we get that $\rho \circ f': \mathrm{Image}(\beta) \to \R$ is continuous. Thus, $f: \mathrm{Image}(\beta) \to \R$ which is $f = \rho \circ f'$ is also continuous and $|f(\beta(S_1, S_2)) - \mathrm{W}_p(S_1, S_2)| < \epsilon$ by the property from Corollary \ref{corollary:product-net-universality}
\end{proof}

Since $f$ in the above Lemma operates on fixed dimensional space, we can now use an MLP to approximate it. Then by the above Lemma and universal approximation of MLPs, there is an MLP which can approximate
$\mathrm{W}_p: \mathcal{X} \times \mathcal{X} \to \R$ to an arbitrary $\epsilon$ additive error. 
However, this approach has the following issues: 
while an MLP can approximate $\mathrm{W}_p$ to an arbitrary $\epsilon$ additive error, the model complexity of the MLP depends on not only the approximation error, but also on the maximum size $N$ of the input point set. In contrast, our neural network $\Network$ introduced in Eqn (\ref{eq:network-definition}) has model complexity which is independent of $N$; see Corollary \ref{cor:goodNN}. 
Furthermore, the function represented by the MLP (that we use to approximate $\phi$) however may not be symmetric to the two point sets, and may not guarantee permutation invariance with respect to $G \times G$. In practice, computationally expensive techniques such as data augmentation are often required in order for an unconstrained MLP to approximate a structured function such as our Wasserstein distance function, which is an \SFGI-function. 

\paragraph{Comparison with Siamese architectures.} 
As mentioned previously, one common way of learning product functions is to use a Siamese network which embeds $\mathcal{X}$ to Euclidean space and then use some $\ell_p$ norm to approximate the desired product function.
Consider the learning of Wasserstein distances again. 
To approximate $W_p(A, B)$ for two point sets $A$ and $B$. 
The Siamese network will first embed each point set to a fixed dimensional Euclidean space $\R^D$ via a function $\phi_\theta$ modeled by a neural network, and then compute $\|\phi_\theta(A) - \phi_\theta(B)\|_q$ for some $1\le q < \infty$. 
Intuitively, compared to our neural network $\Network$ introduced in Eqn (\ref{eq:network-definition}) (and recall Figure \ref{fig:product-network}), the Siamese network will replace the outer function $\rho$ by simply the $L_q$-norm of $\phi_\theta(A) - \phi_\theta(B)$. 
However, this approach intuitively requires that one can find a near-isometric embedding of the Wasserstein distance to the $L_q$ distance in a Euclidean space. 
However, there exists lower bound results on the distortion incurred when embedding Wasserstein distance to $L_p$ space. In the following section, we will review one such result on the lower bound of metric distortions when embedding the Wasserstein distance to $L_1$ space; implying that if we chose $q=1$, then in the worst case the Siamese network will incur at least as much distortion as that lower bound.

\section{Non-embeddability theorems for Wasserstein distances}
\label{appendix:euc-embeddings}

Here we summarize results pertaining to the limitations of embedding Wasserstein distance to the $L_q$ distance in a Euclidean space. Consider probability distributions over a grid of points in $\R^2$, $\{0,1, \dots, D\}^2$ equipped with the 1-Wasserstein distance, $(\mathcal{P}(\{0, 1, \dots, D\}^2), \mathrm{W}_1)$. Let $L_1$ denote the space of Lebesgue measureable functions $f: [0, 1] \to \R$ such that 
\begin{equation*}
    \|f\|_1 = \int_0^1 |f(t)|dt.
\end{equation*}
Given a mapping $F: (\mathcal{P}(\{0, 1, \dots, n\}^2), \mathrm{W}_p) \to L_1$ such that for any $\mu, \rho \in (\mathcal{P}(\{0, 1, \dots, D\}^2), \mathrm{W}_p)$, 
\begin{equation}
    \mathrm{W}_p(\mu, \rho) \leq \|F(\mu) - F(\rho)\| \leq L \cdot \mathrm{W}_p(\mu, \rho)
\end{equation}
the \textit{distortion} is the value of $L$. 
\begin{theorem}[\citep{naor2007planar}]
    Any embedding $(\mathcal{P}(\{0,1, \dots, n\}^2, \mathrm{W}_1) \to L_1$ must incur distortion at least $\Omega(\sqrt{\log D})$
\end{theorem}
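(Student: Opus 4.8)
This is an external result, cited from \citet{naor2007planar}; the plan is to reconstruct its proof, whose backbone is a Poincar\'e-type inequality that every $L_1$ metric obeys but the planar earthmover metric violates by the stated factor. The first step is the cut-cone decomposition: any semimetric of the form $\rho(u,v)=\|F(u)-F(v)\|_{L_1}$ is a nonnegative combination of cut semimetrics $\delta_S(u,v)=|\mathbf 1_S(u)-\mathbf 1_S(v)|$. Hence, to show that every $F:(\mathcal P(\{0,\dots,D\}^2),\mathrm W_1)\to L_1$ incurs distortion $\Omega(\sqrt{\log D})$, it suffices to pick a finite test family $\{\mu_u\}_u$ of distributions on the grid and fixed nonnegative weights $\{a_{uv}\}$ (``short'' pairs) and $\{b_{uv}\}$ (``long'' pairs) with
\begin{equation*}
\sum_{u,v} b_{uv}\,\delta_S(\mu_u,\mu_v)\ \le\ C\sum_{u,v} a_{uv}\,\delta_S(\mu_u,\mu_v)\qquad\text{for every cut }S,
\end{equation*}
where $C=O(1)$, while simultaneously $\sum_{u,v} b_{uv}\,\mathrm W_1(\mu_u,\mu_v)\ \ge\ c\sqrt{\log D}\,\sum_{u,v} a_{uv}\,\mathrm W_1(\mu_u,\mu_v)$. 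Feeding the two-sided bound $\mathrm W_1\le\rho\le L\,\mathrm W_1$ of an $L$-bi-Lipschitz embedding into these two inequalities immediately forces $L=\Omega(\sqrt{\log D})$, so everything reduces to building the test family and the two estimates.

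For the test family I would index measures by the abelian group $G=\mathbb Z_m^2$ with $m\asymp D$ (taking $m$ a prime or a power of two so that Fourier analysis on $G$ is clean): fix a single ``spread-out'' base measure $\mu_0$ supported in the grid and set $\mu_g$ to be its translate by $g\in G$. The short weights charge unit-lattice translations, and the long weights are the Fourier weighting of pairs that makes the displayed inequality tight for the genuine $\ell_1$ metric on $G$. Two earthmover estimates then drive the argument: (i) a unit translation costs $\mathrm W_1(\mu_g,\mu_{g+e_i})\le 1$ for \emph{any} $\mu_0$ (mass one moved distance one), so the short side is cheap and uniform; and (ii) $\mu_0$ must be engineered — through a multiscale / self-similar construction on the grid — so that $\mathrm W_1$ restricted to the translates is a concave-type perturbation of the $\ell_1$ metric on $G$ that stays ``metrically active'' at all $\Theta(\log D)$ dyadic scales at once. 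Passing to the Fourier transform on $\mathbb Z_m^2$, one expresses both the cut-metric Poincar\'e inequality and the earthmover quantities through the spectral measure of the relevant function; the smoothing induced by convolving with $\mu_0$ redistributes spectral mass in a way $\mathrm W_1$ tolerates but no $L_1$ metric does, and summing the contributions of the $\Theta(\log D)$ scales — each of which a single cut semimetric can meet at only one scale — is exactly what produces the $\sqrt{\log D}$ gap.

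The technical heart, and the step I expect to be the real obstacle, is (ii): choosing $\mu_0$ and the long-pair weights so that the translates genuinely realize the ``every dyadic scale is expensive while local moves stay cheap'' behavior, and then proving the matching quantitative bounds — a clean lower bound on $\sum_{u,v} b_{uv}\,\mathrm W_1(\mu_u,\mu_v)$ together with an $O(1)$ upper bound on the cut-metric Poincar\'e constant $C$ — via Fourier/spectral estimates that fit together to give a gap of order exactly $\sqrt{\log D}$. By contrast, the cut-cone reduction, the translation-invariant setup on $G$, and the unit-translation upper bound are routine; all the difficulty is concentrated in the design and analysis of the multiscale base measure.
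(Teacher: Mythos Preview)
The paper does not prove this theorem; it is quoted verbatim as an external result from \citet{naor2007planar} and used as a black box to obtain the subsequent corollary about Siamese architectures. There is therefore no ``paper's own proof'' against which to compare your proposal.

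As a reconstruction of the Naor--Schechtman argument, your high-level strategy --- reduce to cut metrics via the cut-cone decomposition, then exhibit a Poincar\'e-type inequality that every cut semimetric satisfies with constant $O(1)$ while the earthmover metric violates it by $\sqrt{\log D}$ --- is the right shape for an $L_1$ non-embeddability proof. However, the specific mechanism you propose (translates of a single base measure on $\mathbb Z_m^2$ analyzed on the Fourier side) is not how the original paper proceeds, and you yourself flag step~(ii) as the real obstacle without indicating how to resolve it. The actual proof in \citet{naor2007planar} goes through a recursive graph construction (of Laakso type) realized inside the grid, combined with the fact that $L_1$ has Markov type~$2$; the $\sqrt{\log D}$ comes from the $\Theta(\log D)$ levels of recursion interacting with the type-$2$ exponent. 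Your translation-invariant Fourier scheme is a plausible heuristic but, as written, is a plan with its hardest step left open rather than a proof sketch one could complete by filling in routine details.
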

For $(\mathcal{P}(\{0, 1, \dots, D\}^d), \mathrm{W}_1)$ where $d \geq 2$, $\mathcal{P}(\{0, 1, \dots, D\}^d$ contains $\mathcal{P}(\{0, 1, \dots, D\}^2)$ so the lower bound $O(\sqrt{D})$ still applies for $L_1$ embeddings from $\mathcal{P}(\{0, 1, \dots,D\}^d)$, $d > 2$.

From the above results, we can see that for any Siamese architecture (even in the simple case of finite point sets in $\R^2$ and $\R^3$), we are unable to approximate the Wassserstein distances via any $L_1$ embedding. 

\begin{corollary}
    Given a neural network $\mathcal{N}_{\mathrm{Siamese}}: \mathcal{X} \to \R^d$ where $\mathcal{X}$ are weighted point sets over $\{0, 1, \dots, D\}^2$, $\|\mathcal{N}_{\mathrm{Siamese}}(\mu) - \mathcal{N}_{\mathrm{Siamese}}(\nu)\|_1$ incurs distortion at least $\Omega(\sqrt{\log D})$.
\end{corollary}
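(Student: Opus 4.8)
The plan is to derive the statement as a direct consequence of the non-embeddability theorem for the planar Wasserstein metric (the cited result of \citep{naor2007planar}), after observing that a Siamese network followed by the $\ell_1$ norm is simply an embedding of $(\mathcal{X}, \mathrm{W}_1)$ into $L_1$. Throughout I identify weighted point sets over the grid $\{0,1,\dots,D\}^2$ with finitely supported probability measures on that grid, so that $\mathcal{X}$ is (a subset of, and for $N$ at least the grid size exactly) $\mathcal{P}(\{0,1,\dots,D\}^2)$ equipped with $\mathrm{W}_1$.

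First I would set up the reduction. Given $\mathcal{N}_{\mathrm{Siamese}}: \mathcal{X} \to \R^d$, view it as a map $F$ into the finite-dimensional normed space $(\R^d, \|\cdot\|_1) = \ell_1^d$. The classical fact I would invoke is that $\ell_1^d$ embeds isometrically into $L_1([0,1])$ (more generally into $L_1$ over any non-atomic measure space): send the $i$-th coordinate direction to a suitably scaled indicator of a subinterval of $[0,1]$. Composing gives an embedding $\widetilde F: (\mathcal{P}(\{0,\dots,D\}^2), \mathrm{W}_1) \to L_1$ with $\|\widetilde F(\mu) - \widetilde F(\nu)\|_1 = \|\mathcal{N}_{\mathrm{Siamese}}(\mu) - \mathcal{N}_{\mathrm{Siamese}}(\nu)\|_1$ for all $\mu,\nu$. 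Next I would normalize: distortion is invariant under multiplying the embedding by a positive scalar, so without loss of generality I may rescale $\widetilde F$ so that $\mathrm{W}_1(\mu,\nu) \le \|\widetilde F(\mu)-\widetilde F(\nu)\|_1$ for all $\mu,\nu$ (if instead the network collapses two distinct grid measures to the same point, the lower inequality fails outright and the distortion is infinite, so there is nothing to prove). Letting $L$ be the least constant with $\|\widetilde F(\mu)-\widetilde F(\nu)\|_1 \le L\cdot \mathrm{W}_1(\mu,\nu)$, the map $\widetilde F$ is an embedding of $(\mathcal{P}(\{0,\dots,D\}^2), \mathrm{W}_1)$ into $L_1$ of distortion $L$, so the cited theorem forces $L = \Omega(\sqrt{\log D})$, which is exactly the claim for $\|\mathcal{N}_{\mathrm{Siamese}}(\cdot) - \mathcal{N}_{\mathrm{Siamese}}(\cdot)\|_1$ as an approximation of $\mathrm{W}_1$.

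I do not expect a genuine obstacle here, since the statement is a corollary in the literal sense; the only points requiring care are (i) spelling out that a finite-dimensional $\ell_1$ target is a special case of an $L_1$ target via the isometric inclusion $\ell_1^d \hookrightarrow L_1$, and (ii) the bookkeeping around scaling and the degenerate non-injective case, so that the two-sided distortion inequality in the form used by \citep{naor2007planar} applies verbatim. Everything else is a direct citation.
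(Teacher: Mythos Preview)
Your proposal is correct and follows essentially the same approach as the paper, which in fact gives no explicit proof and treats the corollary as immediate from the cited non-embeddability theorem of \citep{naor2007planar}. Your extra care in spelling out the isometric inclusion $\ell_1^d \hookrightarrow L_1$ and handling the degenerate non-injective case simply makes rigorous what the paper leaves implicit.
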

Note that if we consider our input for a Siamese architecture to be finite point sets over $\{0,1, \dots, D\}^2$, we allow multisets so the input set size is not bounded by $D$. 

\section{Experimental details}
\label{appendix:experiments}
\subsection{Baseline models for comparison with $\Network$}
Here, we will detail two baseline neural networks in our experiments.
\paragraph{Wasserstein point cloud embedding (WPCE) network} First defined by \citep{kawano2020learning}, the WPCE network is an Siamese autoencoder architecture. It consists of an encoder network $\mathcal{N}_{\mathrm{encoder}}$ and a decoder network $\mathcal{N}_{\mathrm{decoder}}$. WPCE takes as input two point sets $P, Q \subseteq \R^d$.  $\mathcal{N}_{\mathrm{encoder}}$ is a permutation invariant neural network - which we chose to be DeepSets. In other words,
\begin{equation*}
    \mathcal{N}_{\mathrm{encoder}}(P) = \phi_{\theta_2}\Big( \sum_{x \in P} h_{\theta_1}(x) \Big).
\end{equation*}
Note that one may also choose PointNet to be $\mathcal{N}_{\mathrm{encoder}}$. However, in our experiments, we did not see a large difference in approximation quality between using PointNet and DeepSets (where the difference between the two amounts to using a sum vs. max aggregator). For sake of consistency with $\Network$, we chose to use a sum aggregator for $\mathcal{N}_{\mathrm{encoder}}$ (DeepSets). The decoder network, $\mathcal{N}_{\mathrm{decoder}}$, is a fully connected neural network which outputs a fixed-size point set in $\R^d$. WPCE then uses a Sinkhorn reconstruction loss term to regularize the embedding produced by the encoder. Thus, given a set of paired input point sets and their Wasserstein distance $\{(P_i, Q_i, \mathrm{W}_1(P_i, Q_i)) : i \in [N]\}$ the loss which we optimize over for WPCE is 
\begin{multline}
\label{eq:WPCE-loss}
    \mathcal{L}(P, Q) = \frac{1}{N} \sum \Big(\|\mathcal{N}_{\mathrm{encoder}}(P_i) - \mathcal{N}_{\mathrm{decoder}}(Q_i)\|_2 -  W_1(P_i, Q_i)\Big)^2 \\+ \lambda \Big(\frac{1}{N} \sum S_{\epsilon}(\mathcal{N}_{\mathrm{decoder}}(\mathcal{N}_{\mathrm{encoder}}(P_i)) + \frac{1}{N} \sum S_{\epsilon}(\mathcal{N}_{\mathrm{decoder}}(\mathcal{N}_{\mathrm{encoder}}(Q_i)) \Big)
\end{multline}
where $\lambda$ is some constant in $\R$ that controls the balance between the two loss terms and $S_{\epsilon}$ is the Sinkhorn divergence between $P_i$ and $Q_i$. Note that $\epsilon$ in $S_{\epsilon}$ refers to the regularization parameter for the Sinkhorn divergence. For our experiments, we chose $\lambda = 0.1$ and $\epsilon = 0.1$. Furthermore, for each dataset used in our experiments, we set used a range of different sizes for the fixed-size output point set. This parameter is summarized per dataset in  Table \ref{tab:WPCE-decoding}.
\begin{table}[t]
    \centering
    \caption{Size of the output point set from the WPCE decoder for each dataset. }
    \begin{tabular}{lcccccc}
        \toprule
             & noisy-sphere-2 & noisy-sphere-6 & uniform & ModelNet-small & ModelNet-large & RNAseq\\
        \midrule
        Size & 200 & 200 & 256 & 100 & 2048 &100\\
        \bottomrule
    \end{tabular}
    \label{tab:WPCE-decoding}
\end{table}

\paragraph{Siamese DeepSets.} The baseline Siamese DeepSets model, $\mathcal{N}_{\mathrm{SDeepSets}}(\cdot, \cdot)$, consists of a single DeepSets model which maps two input point sets to some Euclidean space $\R^d$. 
The $\ell_2$-norm between the final embeddings is then used as the final estimate for Wasserstein distance. Formally, let $\mathcal{N}_{\mathrm{DeepSets}}(P) = \phi_{\theta_2}\Big(\sum_{x \in P} h_{\theta_1}(x)\Big)$, where $\phi_{\theta_2}$ and $h_{\theta_1}$ are both MLPs, be the DeepSets model. 
Then given two point sets $P, Q$, the final approximation of Wasserstein distance given by $\mathcal{N}_{\mathrm{SDeepSets}}(P, Q)$ is 
\begin{equation*}
    \mathcal{N}_{\mathrm{SDeepSets}}(P, Q) = \|\mathcal{N}_{\mathrm{DeepSets}}(P) - \mathcal{N}_{\mathrm{DeepSets}}(Q)\|_2.
\end{equation*}

\begin{table}[t]
    \centering
    \caption{Summary of details for each dataset. Note that min and max refer to the minimum and maximum number of points per input point set. }
    \begin{tabular}{lcccccc}
        \toprule
                      & dim & min & max & training pairs & val pairs \\
        \midrule
        noisy-spheres-2 & 3 & 100 & 300 & 2000 & 200 \\
        noisy-spheres-6 & 6 & 100 & 300 & 3600 & 400 \\
        uniform & 2 & 256 & 256 & 3000 & 300  \\
        ModelNet-small & 3 & 20 & 200 & 3000 & 300 \\
        ModelNet-large & 3 & 2048 & 2048 & 4000 & 400 \\
        RNAseq & 2000 & 20 & 200 &3000 & 300\\
        \bottomrule
    \end{tabular}
    \label{tab:dataset-details}
\end{table}

\subsection{Training and implementation details}
\paragraph{Datasets.} We used several synthetic datasets as well as the ModelNet40 point cloud dataset. Furthermore, we used two different types of synthetic datasets. We construct the `noisy-sphere-$d$' dataset by sampling pairs of point clouds from four $d$-dimensional spheres centered at the origin with increasing radiuses of 0.25, 0.5, 0.75, and 1.0. For our experiments, we used `noisy-sphere-3' and `noisy-sphere-6'. Finally, the `uniform' dataset of points sets in $\R^2$ is constructed by sampling points sets from the uniform distribution on $[-4, 4] \times [-4, 4]$. The full details and names of each dataset are summarized in Table \ref{tab:dataset-details}.

\paragraph{Implementation.} We implement all neural network architectures using the PyTorch\citep{paszke2019pytorch} and GeomLoss \citep{feydy2019interpolating} libraries. The ground truth 1-Wasserstein distances and Sinkhorn approximations were computed using Python Optimal Transport (POT) \citep{flamary2021pot}. Note that for large point sets and for higher dimensional datasets, there is often a high degree of numerical instability in the POT implementation of the Sinkhorn algorithm. In these cases (ModelNet-large and RNAseq) we used our own implementation of the Sinkhorn algorithm. For each model, we used the Leaky-ReLU as the non-linearity. To train each model, we set the batch size for each dataset to be 128 and the learning rate to 0.001. All models were trained on an Nvidia RTX A6000 GPU. For both $\Network$ and $\mathcal{N}_{\mathrm{SDeepSets}}$, given two input point sets, we minimize on the mean squared error between the approximation produced by the network and the true Wasserstein distance. In other words, given two point sets $P, Q$, the loss for $\Network$ is defined as $\mathcal{L}_{\Network}(P, Q) = \Big(\Network(P, Q) - \mathrm{W}_1(P, Q)\Big)^2$ and the loss for Siamese DeepSets is $\mathcal{L}_{\mathcal{N}_{\mathrm{SDeepSets}}}(P, Q) = \Big(\mathcal{N}_{\mathrm{SDeepSets}}(P, Q) - \mathrm{W}_1(P, Q) \Big)^2$.
For WPCE, we train the network using the loss function defined in Eq. \ref{eq:WPCE-loss}. Note that for $\Network$, the hyperparameters are the width, depth, and output dimension for the MLPs which represent  $h_{\theta_1}$ and $\phi_{\theta_2}$ and the width and depth the MLP which represents $\rho_{\theta_3}$. For WPCE, we set the decoder to a three layer neural network with width 100 and adjusted the width, depth, and output dimension for the MLPs which represent $\phi_{\theta_2}$ and $h_{\theta_1}$ in $\mathcal{N}_{\mathrm{encoder}}$. To find the best model for each architecture, we randomly sampled hyperparameter configurations and conducted a hyperparameter search over 85 models for $\Network$ and 75 models for both WPCE and $\mathcal{N}_{\mathrm{SDeepSets}}$.

\subsection{Approximating 2-Wasserstein distance}
To further show the use of our model, we additionally approximate the 2-Wasserstein distance; see Table \ref{tab:2-Wasserstein-error} for results. The experimental set-up is the same as for 1-Wasserstein distance and we largely see the same trends as we see for 1-Wasserstein distance; that is, $\mathcal{N}_{\mathrm{ProductNet}}$ outperforms all other neural network implementations. Note that Table \ref{tab:2-Wasserstein-error} shows that the Sinkhorn approximation with $\epsilon = 0.01$ is more accurate than $\mathcal{N}_{\mathrm{ProductNet}}$. However, as the $\epsilon$ parameter for the Sinkhorn approximation decreases, the computation time increases. In Table \ref{tab:w2-computation-time}, we show that the Sinkhorn approximation is already much slower than $\mathcal{N}_{\mathrm{ProductNet}}$ at $\epsilon = 0.1$ while also having a less accurate approximation. The Sinkhorn approximation with $\epsilon = 0.01$ (reported in Table \ref{tab:2-Wasserstein-error} is slower the Sinkhorn approximation with $\epsilon = 0.1$ and additionally, is also much slower than $\mathcal{N}_{\mathrm{ProductNet}}$ at inference time. 

\subsection{Generalization to large point sets}
In addition to the results reported in Tables \ref{tab:Wasserstein-error} and \ref{tab:2-Wasserstein-error}, which record the average relative error for point sets of unseen sizes during training, we have also included several plots in Figures \ref{fig:generalization_graph} and \ref{fig:generalization-w2} which demonstrate how the error for each approximation method changes as the input point set size increases for both 1-Wasserstein distance and 2-Wasserstein distance. Observe that for ModelNet-small, noisy-sphere-3, and noisy-sphere-6, the error for $\Network$ exhibits a significantly slower increase as compared to WPCE. 
The rate at which the error increases is not as evident for the uniform and ModelNet-large datasets. 
It is worth mentioning that we trained the model on fixed-size input for both of these datasets. 
It is possible that training with fixed-size input leads a rapid deterioration in approximation quality for WPCE and $\mathcal{N}_{\mathrm{SDeepSets}}$ when dealing with point sets of sizes not seen at training time. 
Furthermore, consider that WPCE may be especially sensitive to differences in input sizes at testing time as training WPCE depends on minimizing the Wasserstein difference between the input point set and a fixed-size decoder point set which may cause the model to be overly specialized to point sets of a fixed input size. 
This observation could provide an explanation for the observed plots in both the ModelNet-large and uniform cases.
Finally, as predicted in our theoretical analysis, the performance of the model degrades for higher dimensional datasets i.e. the RNAseq dataset. 

\begin{table}[t]
    \centering
    \caption{Time for 500 1-Wasserstein distance computations in seconds. Note that we chose input point set size to be the maximum possible point set size that we trained on. Additionally, the Sinkhorn distance reported uses $\epsilon = 0.1$ as the regularization parameter. Note that as $\epsilon$ decreases, the error incurred by  the Sinkhorn approximation will decrease but the computation time will also increases. Here, when $\epsilon = 0.1$, the Sinkhorn approximation is already much slower than the neural network approximations while being less accurate.}
    
    \begin{tabular}{lcccccc}
          \toprule
           & & \multicolumn{3}{c}{Models}\\
    \cmidrule(r){3-5}
          Dataset & Input size & $\Network$ & WPCE & $\mathcal{N}_{\mathrm{SDeepSets}}$ & Sinkhorn  & Ground truth \\
          \midrule
          noisy-sphere-3       & 300  & 1.050 & 0.676 & 0.4904 & 2.591 & 2.813\\
          noisy-sphere-6     & 300  & 0.752 & 0.491 & 0.503 & 1.986 & 6.6770 \\
          uniform             & 256  & 0.155 & 0.184 & 0.137 & 15.113 & 1.018\\
          ModelNet-small            & 200  & 0.330 & 0.330 & 0.191 & 2.074 & 1.615 \\
          ModelNet-large      & 2048 & 1.174 & 1.571 & 0.612 & 239.448 &  254.947 \\
          RNAseq & 200 & 1.128 & 0.856 & 0.792   & 92.153 & 105.908 \\
         \bottomrule
    \end{tabular}
    \label{tab:computation-time}
\end{table}

\begin{table}[]
    \centering
    \caption{Comparison of the mean relative error versus overall computation time for 500 approximations of 1-Wasserstein distance for $\Network$ and the Sinkhorn distance. Note that the input point set size is the same as in Table 3 for each dataset. The parameter $\epsilon$ controls the accuracy of the Sinkhorn approximation with lower $\epsilon$ corresponding to a more accurate approximation once the Sinkhorn algorithm converges. However, notice that in some cases, the Sinkhorn algorithm with $\epsilon = 0.01$ has a higher relative error than the Sinkhorn algorithm with $\epsilon = 0.1$ as the algorithm fails to converge within a reasonable number of iterations (1000).}
    \begin{tabular}{ccccc}
    \hline
        Dataset & &$\Network$ & Sinkhorn ($\epsilon =0.10$) & Sinkhorn ($\epsilon=0.01$) \\
        \hline
         \multirow{2}{*}{{\small ModelNet-small}} & Error & 0.084 $\pm$ 0.077 & 0.187 $\pm$ 0.232 & 0.011 $\pm$ 0.003\\
         & Time (s) & 0.330 & 2.074 & 104.712 \\
         \hline
         \multirow{2}{*}{{\small ModelNet-large}} & Error & 0.140 $\pm$ 0.206 & 0.148 $\pm$ 0.048 & 0.026 $\pm$ 0.008\\
         & Time (s) & 1.174 & 239.448 & 1930.852 \\
         \hline
         \multirow{2}{*}{{\small Uniform}} & Error & 0.097 $\pm$ 0.073 & 0.073 $\pm$ 0.009 & 0.023 $\pm$ 0.098\\
         & Time (s) & 0.155 & 15.113  &  63.028\\
         \hline
         \multirow{2}{*}{noisy-sphere-3} & Error & 0.046 $\pm$ 0.043 & 0.187 $\pm$ 0.232 & 0.162 $\pm$ 0.132\\
         & Time (s) & 1.050 & 2.591 & 214.185\\
         \hline
         \multirow{2}{*}{noisy-sphere-6} & Error & 0.015 $\pm$ 0.014 & 0.137 $\pm$ 0.122 & 0.326 $\pm$ 0.135\\
         & Time (s) & 0.752 & 1.986 & 101.763\\
         \hline
         \multirow{2}{*}{RNAseq} & Error & 0.012 $\pm$ 0.010 & 0.040 $\pm$ 0.009 & 0.035$\pm$ 0.013\\
         & Time (s) & 1.128 & 92.153 & 91.573\\
         \hline
    \end{tabular}
    \label{tab:computation-time-comparison}
\end{table}

\begin{figure}
     \centering
     \begin{subfigure}[b]{0.3\textwidth}
         \centering
         \includegraphics[width=\textwidth]{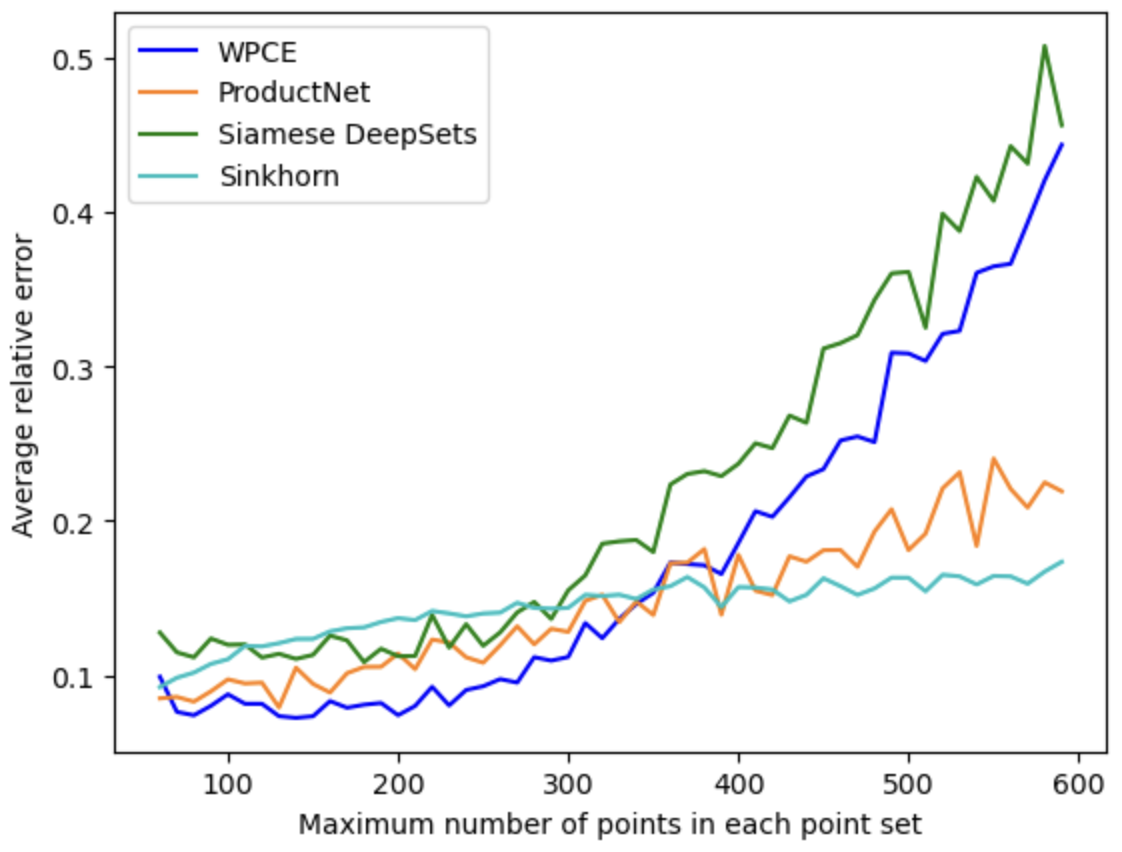}
         \caption{ModelNet-small}
         \label{fig:y equals x}
     \end{subfigure}
     \hfill
     \begin{subfigure}[b]{0.3\textwidth}
         \centering
         \includegraphics[width=\textwidth]{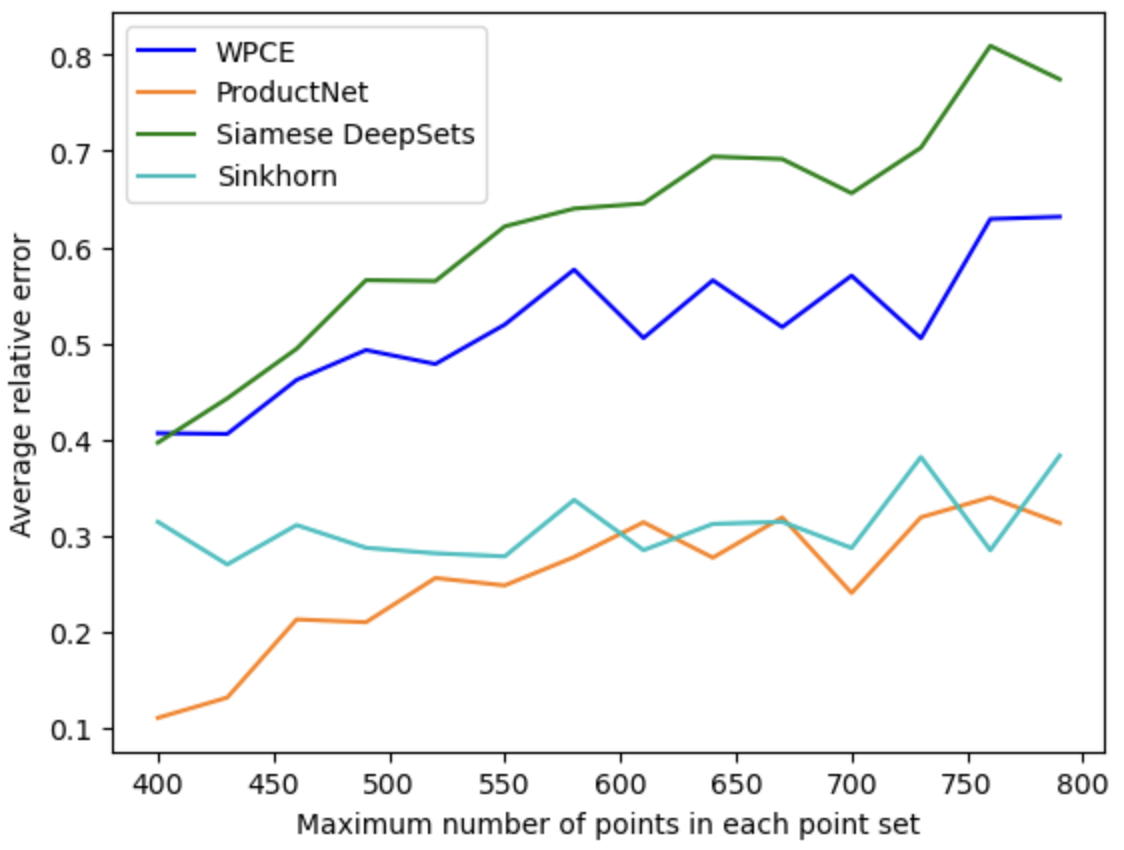}
         \caption{noisy-sphere-3.}
         \label{fig:nsphere3}
     \end{subfigure}
     \hfill
     \begin{subfigure}[b]{0.3\textwidth}
         \centering
         \includegraphics[width=\textwidth]{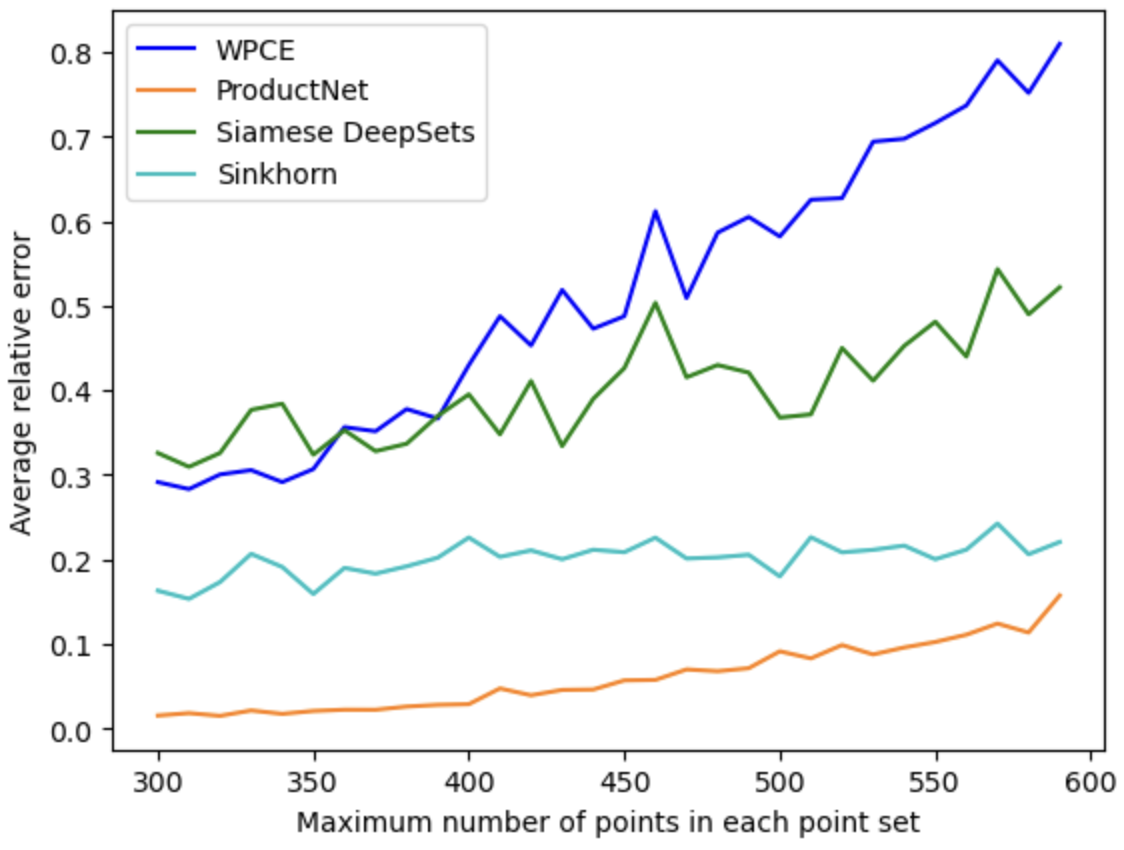}
         \caption{noisy-sphere-6.}
         \label{fig:nsphere6}
     \end{subfigure}
     \hspace*{\fill}%
     \begin{subfigure}[b]{0.3\textwidth}
         \centering
         \includegraphics[width=\textwidth]{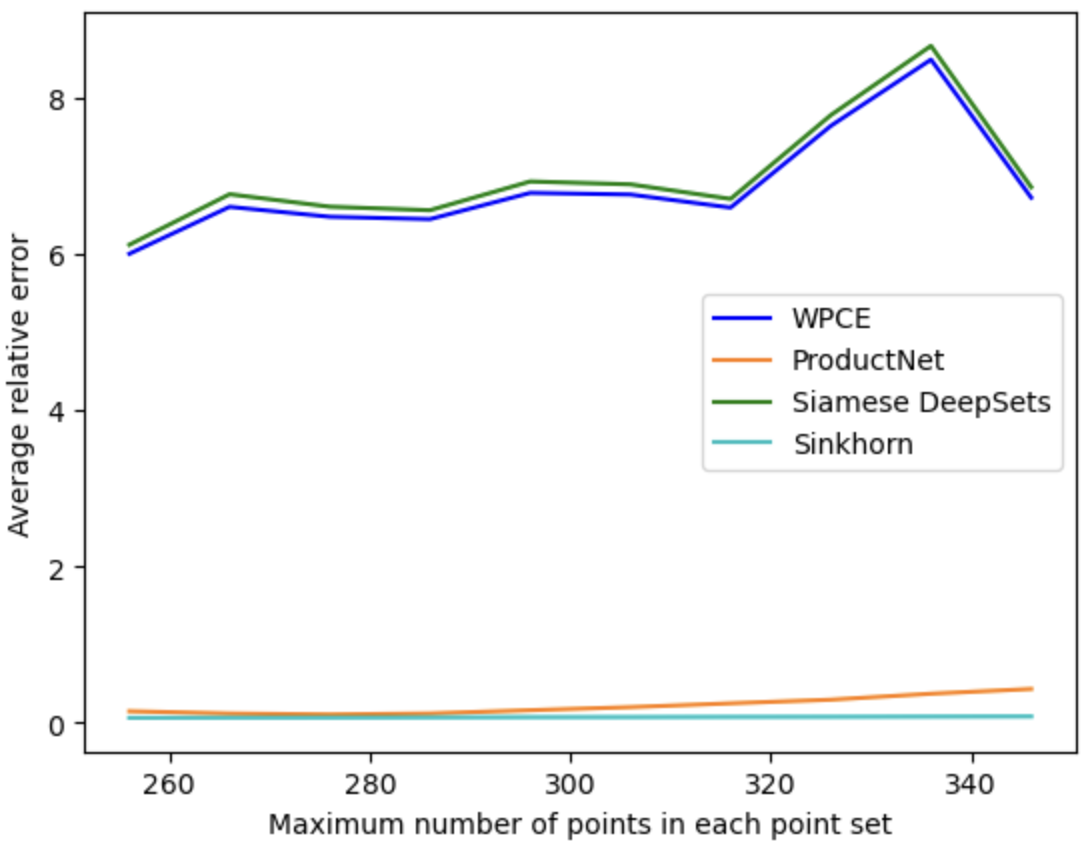}
         \caption{uniform.}
         \label{fig:uniform}
     \end{subfigure}
     \hfill
     \begin{subfigure}[b]{0.3\textwidth}
         \centering
         \includegraphics[width=\textwidth]{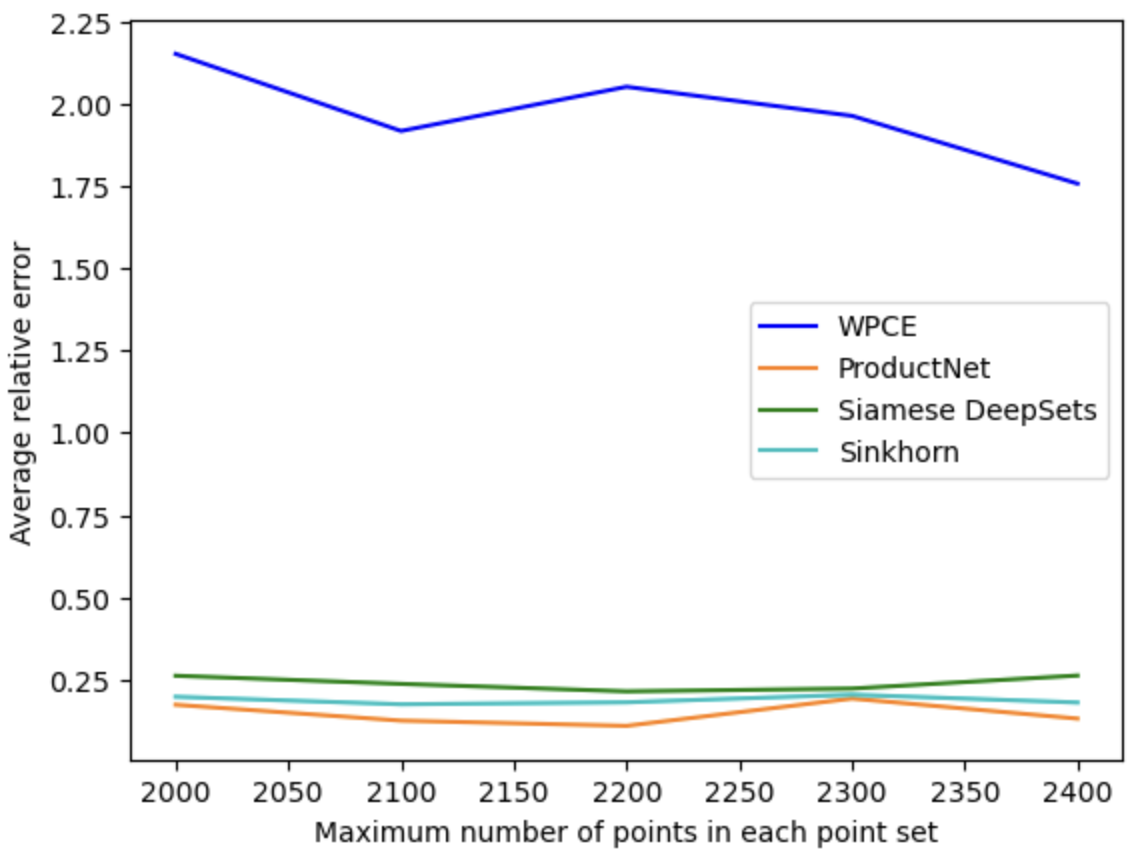}
         \caption{ModelNet-large.}
         \label{fig:modelnetlarge}
     \end{subfigure}
     \hfill
     \hfill
     \begin{subfigure}[b]{0.3\textwidth}
         \centering
         \includegraphics[width=\textwidth]{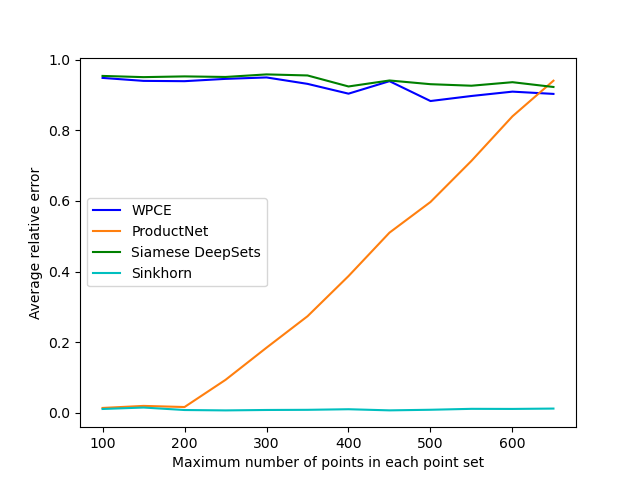}
         \caption{RNAseq.}
         \label{fig:rnaseq}
     \end{subfigure}
     \hfill
     \hspace*{\fill}%
        \caption{Average error for 1-Wasserstein approximation for each model as the maximum number of points increases. Note that the Sinkhorn approximation is the $\epsilon = 0.1$. These graphs include point set sizes not seen at training time to display how each approximation performs on unseen examples. Note that especially for ModelNet-small, noisy-sphere-3, and noisy-sphere-6, we can see that the error for $\Network$ increases at a slower rate than WPCE.}
        \label{fig:generalization-w2}
\end{figure}

\begin{table}[t]
    \centering
    \caption{Mean relative error between approximations and 2-Wasserstein distance between point sets. The top row for each dataset shows the error for point sets with input sizes that were seen at training time; while the bottom row shows the error for point sets with input sizes that were not seen at training time. Note that the Sinkhorn approximation is computed with the regularization parameter set to $\epsilon = 0.01$.}
    
    \begin{tabular}{lccccc}
          \hline
          Dataset & Input size & $\Network$ & WPCE & $\mathcal{N}_{\mathrm{SDeepSets}}$ & Sinkhorn  \\
          \hline
          \multirow{2}{*}{ noisy-sphere-3} & [100, 300] & \textcolor{red}{\textbf{0.054 $\pm$ 0.071}} & 0.291 $\pm$ 0.201 & 0.400 $\pm$ 0.336 & 0.078 $\pm$ 0.186\\
          & [400, 600] & 0.188 $\pm$ 0.197 & 0.387 $\pm$ 0.386 & 0.427 $\pm$ 0.375 & \textbf{\textcolor{red}{0.161 $\pm$ 0.311}}\\
         \hline
         \multirow{2}{*}{ noisy-sphere-6} & [100, 300] & 0.024 $\pm$ 0.010 & 0.331 $\pm$ 0.237 & 0.358 $\pm$ 0.231 & \textbf{\textcolor{red}{0.019 $\pm$ 0.057}}\\
         & [400, 600] & 0.092 $\pm$ 0.074 & 0.434 $\pm$ 0.598 & 0.623 $\pm$ 0.596 & \textbf{\textcolor{red}{0.050 $\pm$ 0.039}}\\
         \hline
         \multirow{2}{*}{uniform} & 256 &  \textbf{\textcolor{red}{0.112 $\pm$ 0.082}} & 0.221 $\pm$ 0.162  & 0.241 $\pm$ 0.171 &0.182 $\pm$ 0.044\\
         & [200, 300] & 0.175 $\pm$ 0.123 & 2.431 $\pm$ 2.162 & 4.058 $\pm$ 3.324 & 0.055 $\pm$ 0.053 \\
         \hline
         \multirow{2}{*}{ModelNet-small} & [20, 200] & 0.078 $\pm$ 0.095 & 0.178 $\pm$ 0.148 & 0.183 $\pm$ 0.148 & \textbf{\textcolor{red}{0.023 $\pm$ 0.059}}\\
         & [400, 600] & 0.163 $\pm$ 0.151 & 0.216 $\pm$ 0.252 & 0.227 $\pm$ 0.179 & \textbf{\textcolor{red}{0.034 $\pm$ 0.031}} \\
         \hline
         \multirow{2}{*}{ModelNet-large} & 2048 & 0.187 $\pm$ 0.335 & 0.281 $\pm$ 0.203 & 0.538 $\pm$ 0.298 & \textcolor{red}{\textbf{0.172 $\pm$ 0.065}} \\
         & [1800, 2000] & 0.185 $\pm$ 0.302 & 0.523 $\pm$ 0.526 & 33.086 $\pm$ 28.481 & \textcolor{red}{\textbf{0.046 $\pm$ 0.039}}\\
         \hline
         \multirow{2}{*}{RNAseq } & [20, 200] & 0.049 $\pm$ 0.029 & 0.508 $\pm$ 0.291 & 0.490 $\pm$ 0.271 & \textbf{\textcolor{red}{0.024 $\pm$ 0.009}}\\
         & [400, 600] & \textbf{\textcolor{red}{0.281 $\pm$ 0.057}} & 0.533 $\pm$ 0.300  & 0.568 $\pm$ 0.317 & 0.987 $\pm$ 0.0002\\
         \hline
    \end{tabular}
    \label{tab:2-Wasserstein-error}
\end{table}

\begin{table}[]
    \centering
    \caption{Comparison of the mean relative error versus overall computation time for 300 approximations of 2-Wasserstein distance for $\Network$ and the Sinkhorn distance. The parameter $\epsilon$ controls the accuracy of the Sinkhorn approximation with lower $\epsilon$ corresponding to a more accurate approximation.}
    \begin{tabular}{ccccc}
    \hline
        Dataset & &$\Network$ & Sinkhorn ($\epsilon =0.10$) & Sinkhorn ($\epsilon=0.01$) \\
        \hline
         \multirow{2}{*}{{\small ModelNet-small}} & Error & 0.078 $\pm$ 0.097 & 0.232 $\pm$ 0.132 & 0.019 $\pm$ 0.057\\
         & Time (s) & 0.208 & 1.165 & 9.857\\
         \hline
         \multirow{2}{*}{{\small ModelNet-large}} & Error & 0.187 $\pm$ 0.335 & 0.363 $\pm$ 0.255 & 0.172 $\pm$ 0.089\\
         & Time (s) & 2.841 & 6.079 & 36.265\\
         \hline
         \multirow{2}{*}{{\small uniform}} & Error & 0.112 $\pm$ 0.082 & 0.0303 $\pm$ 0.022 & 0.182 $\pm$ 0.044 \\
         & Time (s) & 0.712 & 16.515 & 29.312 \\
         \hline
         \multirow{2}{*}{noisy-sphere-3} & Error & 0.054 $\pm$ 0.071 & 0.225 $\pm$ 0.093 & 0.078 $\pm$ 0.186\\
         & Time (s) & 0.677 & 1.591 & 12.760\\
         \hline
         \multirow{2}{*}{noisy-sphere-6} & Error & 0.024 $\pm$ 0.010 & 0.324 $\pm$ 0.316 & 0.023 $\pm$ 0.059\\
         & Time (s) & 0.428 & 0.877 & 9.831\\
         \hline
         \multirow{2}{*}{{\small RNAseq }} & Error & 0.049 $\pm$ 0.029 & 0.031 $\pm$ 0.014 & 0.024 $\pm$ 0.009 \\
         & Time (s) & 0.716 & 47.701 & 81.016 \\
         \hline
    \end{tabular}
    \label{tab:w2-computation-time}
\end{table}

\begin{figure}
     \centering
     \begin{subfigure}[b]{0.30\textwidth}
         \centering
         \includegraphics[width=\textwidth]{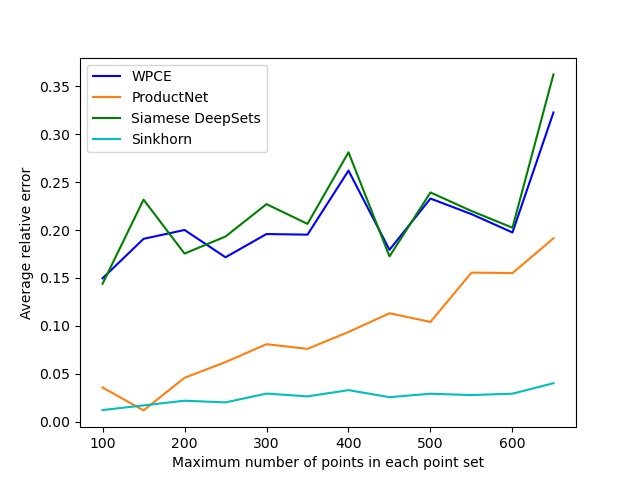}
         \caption{ModelNet-small}
     \end{subfigure}
     \hfill
     \begin{subfigure}[b]{0.30\textwidth}
         \centering
         \includegraphics[width=\textwidth]{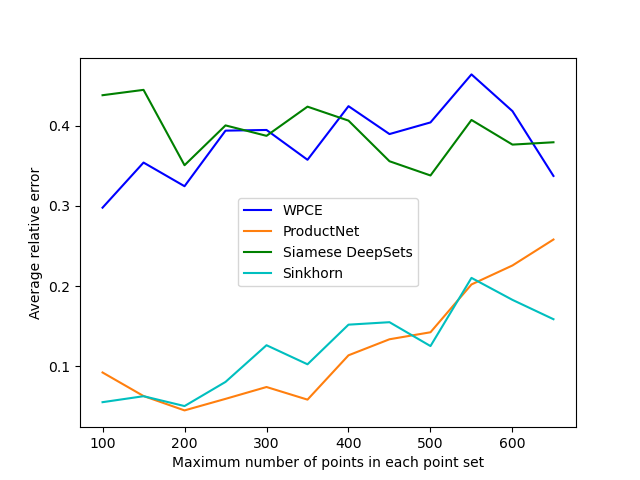}
         \caption{noisy-sphere-3.}
     \end{subfigure}
     \hfill
     \begin{subfigure}[b]{0.30\textwidth}
         \centering
         \includegraphics[width=\textwidth]{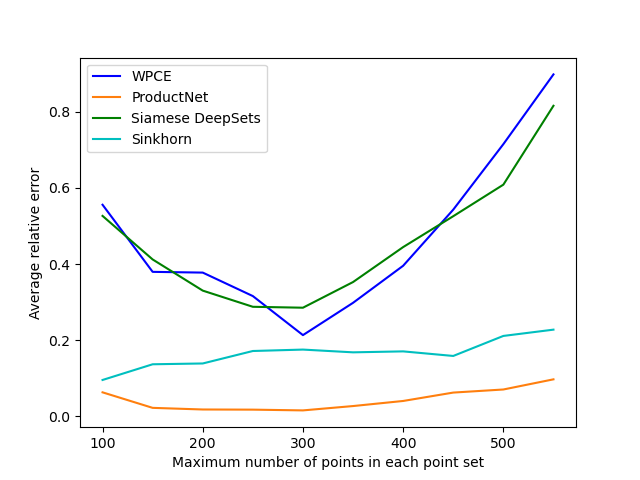}
         \caption{noisy-sphere-6.}
     \end{subfigure}
     \hfill
     \begin{subfigure}[b]{0.30\textwidth}
         \centering
         \includegraphics[width=\textwidth]{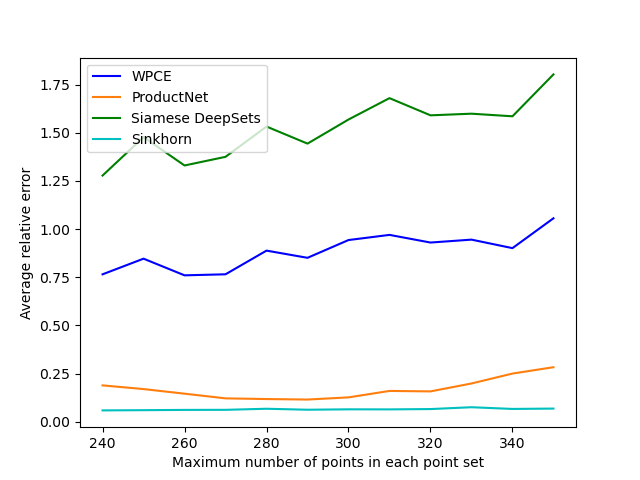}
         \caption{uniform.}
     \end{subfigure}
     \hfill
     \begin{subfigure}[b]{0.30\textwidth}
         \centering
         \includegraphics[width=\textwidth]{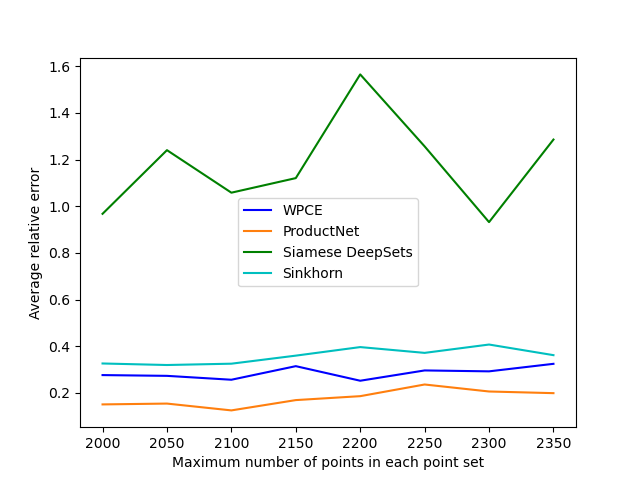}
         \caption{ModelNet-large.}
     \end{subfigure}
     \hfill
     \begin{subfigure}[b]{0.30\textwidth}
         \centering
         \includegraphics[width=\textwidth]{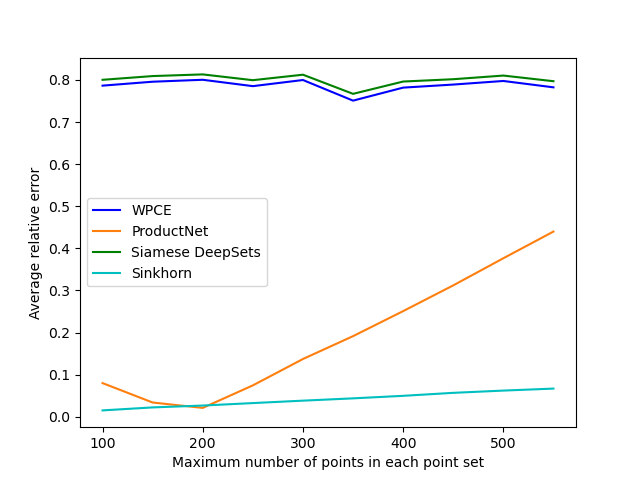}
         \caption{RNAseq.}
     \end{subfigure}
        \caption{Average 2-Wasserstein error for each model as the maximum number of points increases. Note that for all datasets, the Sinkhorn error is with $\epsilon = 0.10$.}
        \label{fig:generalization_graph}
\end{figure}

\section{Extra proofs}

\subsection{Proof of DeepSets Universality.}
\label{appendix:deepsets}
Here we will provide extra details on the proof of Theorem \ref{thm:sum-decomposition} using multisymmetric polynomials. Note that multisymmetric polynomials were previously used in  \citep{segol2019universal} and \citep{maron2019universality} to show universality for equivariant set networks and arbitrary $G$-invariant neural networks for any permutation group $G$.

\begin{proof} 
To begin, we will first define multisymmetric polynomials and power sum multisymmetric polynomials. Let $A[y_1, \dots, y_n]$ be the ring of polynomials in $n$ variables with coefficients in a ring $A$.

    \begin{definition}[Multisymmetric polynomials]
        The multisymmetric polynomials on the $n$ families of $k$ variables $\mathbf{x}_1 \dots, \mathbf{x}_n$ where $\mathbf{x}_i = (x_{i, 1}, \dots, x_{i, k})$ are those polynomials that remain unchanged under every permutation of the $n$ families, $\mathbf{x}_1, \dots, \mathbf{x}_n$.
    \end{definition}
    Let $A$ be a ring. The \textit{algebra of multisymmetric polynomials} in $n$ families of $k$ variables with coefficients in $A$ is denoted $\mathfrak{J}_n^k(A)$. 

    Furthermore, we define the multisymmetric power-sum polynomials:
    \begin{definition}[Multisymmetric power-sum polynomials]
        Let $\alpha = (\alpha_1, \dots, \alpha_k) \in \N^k$. Given $\mathbf{x} = (x_1, \dots, x_k)$, let
        \begin{equation*}
            \mathbf{x}^{\alpha} = x_1^{\alpha_1} \cdots x_k^{\alpha_k}
        \end{equation*}
        The multisymmetric power sum with multi-degree $\alpha$ is 
        \begin{equation*}
            p_{\alpha} = \sum_i^n \mathbf{x}_i^{\alpha}
        \end{equation*}
        Among them, we will consider the set of elementary multisymmetric power sums to be those with $|\alpha| \leq n$. 
    \end{definition}
    Notice that there $t = \binom{n + k}{k}$ multisymmetric power sums. Let $\alpha_1, \dots, \alpha_t$ be the list of all $\alpha \in \N^k$ such that $|\alpha| \leq n$. 
    \begin{theorem}[\citep{briand2004algebra}]
    \label{thm:multisymmetric-power-sums}
        Let $A$ be a ring in which $n!$ is invertible. Then the multisymmetric power sums generate $\mathfrak{J}_n^r(A)$ as an $A$-algebra. 
    \end{theorem}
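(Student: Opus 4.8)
The plan is to combine the classical fundamental theorem of multisymmetric functions with a multivariate form of Newton's identities; the hypothesis that $n!$ is invertible enters only in the second step, where the elementary multisymmetric polynomials are traded for the power sums.

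First I would set up generating functions. Introduce a bookkeeping variable $u$ and formal parameters $\mathbf t=(t_1,\dots,t_r)$, write $\ell_i=\sum_{j=1}^r t_j\,x_{i,j}$ for the $i$-th family, and form $E(u)=\prod_{i=1}^n(1+u\,\ell_i)=\sum_{m=0}^n u^m\,\sigma_m$. Expanding, the $\mathbf t$-coefficients of $\sigma_m$ are exactly the elementary multisymmetric polynomials $e_\alpha$ with $|\alpha|=m$, so that $\{e_\alpha : |\alpha|\le n\}$ is a finite set; I would invoke the classical fact that, when $n!$ is invertible in $A$, this finite set already generates $\mathfrak{J}_n^r(A)$ as an $A$-algebra (the degree-$\le n$ fundamental theorem for vector invariants of $S_n$; see \citep{briand2004algebra} and its references). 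This is also where the count $t=\binom{n+r}{r}$ in the statement comes from.

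Next I would pass from the $e_\alpha$ to the power sums. Put $q_m(\mathbf t)=\sum_{i=1}^n \ell_i^{\,m}$; the multinomial theorem gives $q_m(\mathbf t)=\sum_{|\alpha|=m}\binom{m}{\alpha}\,\mathbf t^\alpha\,p_\alpha$, and since the multinomial coefficient $\binom{m}{\alpha}$ divides $m!$, which divides $n!$, it is a unit of $A$ for every $m\le n$; hence for such $m$ the $\mathbf t$-coefficients of $q_m$ and the power sums $\{p_\alpha:|\alpha|=m\}$ span the same $A$-submodule. Now the logarithmic-derivative identity $E'(u)/E(u)=\sum_{m\ge1}(-1)^{m-1}q_m(\mathbf t)\,u^{m-1}$, read off coefficient-wise, is precisely the Newton recursion $m\,\sigma_m=\sum_{j=1}^m(-1)^{j-1}\sigma_{m-j}\,q_j$ (with $\sigma_0=1$). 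For $m=1,\dots,n$ this writes $\sigma_m$, hence every $e_\alpha$ with $|\alpha|\le n$, as an $A$-polynomial in $q_1,\dots,q_m$ --- the only integers inverted are $1,2,\dots,n$, all units because $n!$ is --- and therefore as an $A$-polynomial in the power sums $p_\beta$ with $|\beta|\le n$. Together with the generation statement above, this shows the power sums $p_\beta$, $|\beta|\le n$, generate $\mathfrak{J}_n^r(A)$; the reverse inclusion is immediate since each $p_\beta$ is visibly multisymmetric. (The same recursion for $m>n$, where $\sigma_m=0$, reproduces the higher Newton identities and re-confirms that the power sums of degree $>n$ are redundant.)

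The real content --- and the step I expect to be the main obstacle --- is the degree-$\le n$ generation invoked above: over an arbitrary coefficient ring this is substantially subtler than the one-family fundamental theorem of symmetric polynomials, and it is exactly what forces the \emph{finite} generating set $\{p_\alpha:|\alpha|\le n\}$ rather than all power sums. The Newton step is routine bookkeeping; its only delicate point is that every integer inverted must divide $n!$, which is also what makes the hypothesis sharp --- already for $r=1$, $p_1,\dots,p_n$ fail to generate the symmetric polynomials over $\mathbb Z$ (for instance $e_n$ is not a $\mathbb Z$-polynomial in them).
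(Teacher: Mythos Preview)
The paper does not prove this theorem at all: it is quoted verbatim as a result of \citep{briand2004algebra} and then \emph{used} in the proof of Theorem~\ref{thm:sum-decomposition}. So there is no ``paper's own proof'' to compare against; the relevant question is whether your outline stands on its own.

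Your Newton--identity step is correct. With $E(u)=\prod_i(1+u\ell_i)$ one has $E'(u)/E(u)=\sum_{m\ge1}(-1)^{m-1}q_m u^{m-1}$, hence $m\sigma_m=\sum_{j=1}^m(-1)^{j-1}\sigma_{m-j}q_j$, and inductively each $\sigma_m$ (for $m\le n$) lies in the $A$-subalgebra generated by $q_1,\dots,q_m$, inverting only the integers $1,\dots,m$. Comparing $\mathbf t^\alpha$-coefficients then expresses each $e_\alpha$ with $|\alpha|\le n$ as an $A$-polynomial in the $p_\beta$ with $|\beta|\le n$. (Your remark that the multinomial coefficients $\binom{m}{\alpha}$ are units is true but unnecessary for this direction: they appear as \emph{multipliers} of $p_\beta$, not divisors.)

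The genuine gap is the one you yourself flag: you invoke, from the same reference \citep{briand2004algebra}, that the elementary multisymmetric polynomials $e_\alpha$ with $|\alpha|\le n$ already generate $\mathfrak J_n^r(A)$ when $n!$ is invertible. That statement is of the same depth as the theorem you are asked to prove --- for $r>1$ it is \emph{not} the single-family fundamental theorem, and it fails over $\mathbb Z$ --- so your argument is a correct reduction between two equivalent formulations rather than an independent proof. Since the paper itself treats the power-sum version as a black box from Briand, your write-up is consistent with the paper's level of detail; but if the intent was to supply a self-contained argument, the ``degree-$\le n$ generation'' step is where the actual work remains to be done.
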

    If we take $A = \R$, we get that the multisymmetric power sum polynomials generate $\mathfrak{J}_N^k$. 
    Now given a continuous function $f: \R^{k \times N} \to \R$ which is invariant to permutations of the columns, we know that $f$ can be approximated by a polynomial $p$ which is invariant to permutations of columns (see \citep{maron2019universality} for a detailed argument). Such a polynomial $p$ is a multisymmetric polynomial in $N$ families of $k$ variables with coefficients in $\R$ i.e., $p \in \mathfrak{J}_N^k$. Given $\mathbf{x} \in \R^k$,
    \begin{equation*}
        \phi(\mathbf{x}) = [x^{\alpha_1}, \dots, x^{\alpha_t}]
    \end{equation*}
    Then 
    \begin{equation*}
        \sum_{i = 1}^N \phi(\mathbf{x}_i) = \begin{bmatrix}
            \sum_i^N x_i^{\alpha_1}\\
            \sum_i^N x_i^{\alpha_2} \\
            \vdots \\
            \sum_i^N x_i^{\alpha_t}
        \end{bmatrix} = \begin{bmatrix}
            p_{\alpha_1} \\
            p_{\alpha_2} \\
            \vdots \\
            p_{\alpha_t}
        \end{bmatrix}
    \end{equation*}
    By Theorem \ref{thm:multisymmetric-power-sums}, we have that $p_{\alpha_1}, \dots, p_{\alpha_t}$ will generate any polynomial in $\mathfrak{J}_N^k$. Then we have some polynomial $q \in \R[y_1, \dots, y_t]$ such that $p = q(p_{\alpha_1}, \dots, p_{\alpha_t})$. 
\end{proof}

\subsection{Proofs from Section 3}
\label{appendix:section-3-proofs}
\subsubsection{Proof of Lemma 3.2}
\begin{proof}
     Let $\epsilon > 0$. 
     Since $f$ is uniformly continuous, $\exists \delta$ such that for all $(A_1, \dots, A_m), (A_1', \dots,  A_m') \in \mathcal{X}_1 \times \cdots \times \mathcal{X}_m$ where $\mathrm{d}_{\mathcal{X}_1 \times \cdots \times \mathcal{X}_k}((A_1, \dots, A_m), (A_1',\dots, A_m')) < \delta$, we have $|f(A_1, \dots, A_m) - f(A_1', \dots, A_m') | < \epsilon$.
     Since for any $\delta > 0$ and any $i \in [m]$, $(\mathcal{X}_i, \mathrm{d}_{\mathcal{X}_i})$ has a $(\delta, a, G)$-sketch, we know that there is an $a_i \in \N^+$ where there are continuous $h_i: \mathcal{X}_i \to \R^{a_i}$ and $g_i: \R^{a_i} \to \mathcal{X}_i$ where $\mathrm{d}_{\mathcal{X}_i}(g_i \circ h_i(A), A) < \delta/m$ for each $A \in \mathcal{X}_i$.

    Let $g': \R^{a_1} \times \cdots \times \R^{a_m} \to \mathcal{X}_i \times \cdots \times \mathcal{X}_m$ be defined as $(u_1, \dots, u_m) \mapsto (g_1(u_1), \dots, g_m(u_m))$. 
    Since $\mathrm{d}_{\mathcal{X}_i}(g_i \circ h_i(A_i), A_i) < \delta/k$ for $A_i \in \mathcal{X}_i$ and $i \in [m]$, 
    \begin{equation*}
        \mathrm{d}_{\mathcal{X}_1 \times \cdots \times \mathcal{X}_m}((g_1 \circ h_1(A_1), \dots,  g_m\circ h_m(A_m)), (A_1, \dots, A_m)) < \delta.
    \end{equation*}
    Let $\rho = f \circ g'$ and $\phi_i = h_i$ Then 
    \begin{align*}
        |f(A_1, \dots, A_m) - \rho(\phi_1(A_1), \dots, \phi_k(A_m))| &= |f(A_1, \dots, A_m) - f \circ g'(h_1(A_1), \dots, h_m(A_m))| \\
        &= |f(A_1, \dots, A_m) - f(g \circ h(A_1),\dots, g \circ h(A_m))| < \epsilon.
    \end{align*}
    Note that if $\mathcal{X}_1 = \mathcal{X}_2 = \cdots = \mathcal{X}_m$ and $G_1 = G_2 = \cdots = G_m$, we can use the same encoding and decoding function - $h_i$ and $g_i$, respectively - for all $\mathcal{X}_i$. 
    Thus, in this case, $\phi_1 = \phi_2 = \cdots = \phi_m$.
\end{proof}

\subsubsection{Proof of Lemma 3.3}
\begin{proof}
    Using the same argument as Theorem \ref{thm:product-network}, we know that for $\epsilon/2$, there is a continuous $h: \mathcal{X} \to \R^a$ and $g: \R^{a} \to \R$ such that 
    \begin{equation*}
        |f(A_1, \dots, A_m) - f(g \circ h(A_1), \dots, g \circ h (A_m))| < \frac{\epsilon}{2}.
    \end{equation*}
    As before, let $g': \R^{a \times m} \to \R$ be $g'(u_1, \dots, u_m) = (g(u_1), \dots, g(u_m))$.
    Take $F: \R^{a \times m} \to \R$ as $F(u_1,\dots, u_m) = f \circ g' (u_1, \dots, u_m) = f(g(u_1), \dots, g(u_m))$ where $u_i$ represents the $i$th column of an element in $\R^{a \times m}$. Note that $F$ is continuous and invariant to permutations of the columns. Let $t = \binom{a + m}{m}$. Therefore, by Theorem \ref{thm:sum-decomposition}, there is a $\gamma: \R^a \to \R^{t}$ and $\rho$ such that
    \begin{equation*}
        \Bigg|f\circ g'(v_1, \dots, v_m) - \rho\Big(\sum_{i = 1}^m \gamma(v_i) \Big) \Bigg| < \frac{\epsilon}{2}
    \end{equation*}

Now set $\phi = \gamma \circ h$ which is a function from $\R^a \to \R^t$. We thus have that
    \begin{align*}
       &\Big|f(A_1, \dots, A_m) - \rho\Big(\sum_{i = 1}^m \phi(A_i)\Big)\Big| \\
        &= \Bigg|f(A_1, \dots, A_m) - f \circ g'(h(A_1),\dots, h(A_m)) + f \circ g'(h(A_1),\dots,  h(A_m)) - \rho\Big(\sum_{i = 1}^m \phi(A_i)\Big)\Bigg| \\
        &< \frac{\epsilon}{2} + \frac{\epsilon}{2}= \epsilon
    \end{align*}
This completes the proof. 
\end{proof}

\end{document}